\documentclass[]{idlab_template}

\usepackage{marvosym} % 必须添加这个包才能识别 \Letter
\usepackage[toc,page,header]{appendix}
\usepackage{autonum}
\usepackage{booktabs}
\usepackage{xcolor,colortbl}
\usepackage{siunitx} % 必须导入，用于小数点对齐
\usepackage{tabularx} % 必须引入，用于控制表格总宽度
\definecolor{darkgray}{gray}{0.1} % 定义比普通灰色更深的灰色

\definecolor{customgray}{HTML}{777777} 

\definecolor{headerblue}{RGB}{218, 232, 252} % 浅蓝色背景

\definecolor{posgreen}{RGB}{0, 100, 0} % 更深、更专业的墨绿色   
\definecolor{darkred}{RGB}{180,0,0}
\definecolor{headerpurple}{RGB}{235, 230, 250}
\definecolor{cSafe}{RGB}{235, 250, 235}   % 浅绿色 (Safe/Stable)
\definecolor{cRisk}{RGB}{255, 245, 235}   % 浅橙色 (Warning)
\definecolor{cBad}{RGB}{255, 230, 230}    % 浅红色 (Bad trend)
\definecolor{cDestruct}{RGB}{255, 215, 215} % 深一点的红色 (Destructive Highlighting)
\newcolumntype{Y}{>{\centering\arraybackslash}X}

\usepackage[T1]{fontenc}
\definecolor{idlabpurple}{HTML}{6F2DA8}
\hypersetup{
    colorlinks=true,       % 开启彩色链接
    linkcolor=idlabpurple,  % 内部引用 (图, 表, 公式)
    citecolor=idlabpurple,  % 文献引用 [1]
    urlcolor=idlabpurple,   % 网络链接 (http)
    allcolors=idlabpurple,  % 强制所有类型的链接统一颜色
    breaklinks=true         % 允许长链接换行
}
\AtBeginDocument{
    \hypersetup{citecolor=idlabpurple, linkcolor=idlabpurple, urlcolor=idlabpurple}
}
\usepackage{minitoc}
\usepackage{wrapfig}
\usepackage{cleveref} 
\usepackage{subcaption}
\usepackage{booktabs}
\usepackage{fontawesome5}
\usepackage{float}
\usepackage{natbib}
\usepackage{latexsym}

\usepackage{url}
\usepackage{amssymb}
\usepackage{amsmath}
\usepackage[utf8]{inputenc}
\usepackage{microtype}
\usepackage{booktabs}
\usepackage{pifont} 
\usepackage{multirow}
\usepackage{makecell}
\usepackage{paralist}
\usepackage{xspace}
\usepackage{color}
\usepackage{xcolor}
\usepackage{colortbl}
\usepackage{adjustbox}
\usepackage{hyperref} 
\usepackage[edges]{forest}
\usepackage{tikz} 
\usepackage{caption}
\usepackage{amsfonts}
\newcommand{\badtoken}[1]{\textcolor{red}{\textbf{#1}}}

\usepackage{longtable}

\definecolor{lightpurple}{RGB}{242, 235, 255}
\definecolor{linegray}{gray}{0.85}

\hypersetup{
    colorlinks,
    linkcolor={blue!80!black},
    citecolor={blue!80!black},
}
\tikzset{
    root/.style =             {align=center, text width=1cm, rounded corners=3pt, line width=0.3mm, fill=gray!10, draw=gray!80, font=\small},
    demographic/.style =         {align=center, text width=1.8cm, rounded corners=3pt, line width=0.3mm, fill=blue!10, draw=blue!80, font=\footnotesize},
    demographic_work/.style =    {align=center, text width=10cm, rounded corners=3pt, line width=0.3mm, fill=blue!10, draw=blue!0, font=\footnotesize},
    character/.style =         {align=center, text width=1.8cm, rounded corners=3pt, line width=0.3mm, fill=red!10, draw=red!80, font=\footnotesize},
    character_work/.style =    {align=center, text width=10cm, rounded corners=3pt, line width=0.3mm, fill=red!10, draw=red!0, font=\footnotesize},
    personalization/.style =           {align=center, text width=1.8cm, rounded corners=3pt, line width=0.3mm, fill=cyan!10, draw=cyan!80, font=\footnotesize},
    personalization_work/.style =      {align=center, text width=10cm, rounded corners=3pt, line width=0.3mm, fill=cyan!10, draw=cyan!0, font=\footnotesize},
    risk/.style =         {align=center, text width=1.8cm, rounded corners=3pt, line width=0.3mm, fill=orange!10, draw=orange!80, font=\footnotesize},
    risk_work/.style =    {align=center, text width=10cm, rounded corners=3pt, line width=0.3mm, fill=orange!10, draw=orange!0, font=\footnotesize},
}

\newcommand{\esc}[1]{\texttt{\detokenize{#1}}}

\usepackage{CJK}
\usepackage{amsthm}
\usepackage{amsmath}   % 数学公式增强

\usepackage{algorithm}
\usepackage{algpseudocode}
\usepackage{amssymb}  
\usepackage{graphicx}
\newtheorem{theorem}{Theorem}[section]      % 定理，按 section 编号
\newtheorem{lemma}[theorem]{Lemma}          % 引理，共用 theorem 编号

\theoremstyle{definition}
\newtheorem{definition}[theorem]{Definition}

\definecolor{lightgraybg}{gray}{0.95}
\definecolor{textgray}{gray}{0.55} % 稍微深一点的灰，保证打印可见性

\definecolor{purplishgray}{HTML}{F0F8FF}

\definecolor{textgray}{gray}{0.4}          
\definecolor{darkgrayval}{gray}{0.35}
\newcommand{\tabscore}[2]{%
  \makebox[2.4em][c]{#1}%
  \hspace{0.4em}%
  \makebox[2.9em][c]{\footnotesize\color{textgray}(#2)}%
}

\makeatletter 
\makeatletter
\renewcommand\fs@ruled{%
  \def\@fs@cfont{\bfseries}\let\@fs@capt\floatc@ruled
  \def\@fs@pre{\color{black}\hrule height.8pt depth0pt \kern2pt}% 强制顶线黑色
  \def\@fs@post{\color{black}\kern2pt\hrule\relax}%             强制底线黑色
  \def\@fs@mid{\color{black}\kern2pt\hrule\kern2pt}%            强制中线黑色
  \let\@fs@iftopcapt\iftrue}
\makeatother

\title{STAPO: Stabilizing Reinforcement Learning for LLMs by Silencing Rare Spurious Tokens}
\author[1,2, \textdagger]{Shiqi Liu}
\author[1,2, \textdagger]{Zeyu He}
\author[1,2, \textdagger]{Guojian Zhan}
\author[1,2]{Letian Tao}
\author[1,2]{Zhilong Zheng}
\author[1]{Jiang Wu}
\author[1,2]{Yinuo Wang}
\author[1]{\\ Yang Guan}
\author[2]{Kehua Sheng}
\author[2]{Bo Zhang}
\author[1]{Keqiang Li}
\author[1,2,\Letter]{Jingliang Duan}
\author[1,\Letter]{Shengbo Eben Li}

\affiliation[1]{School of Vehicle and Mobility \& College of AI, Tsinghua University}
\affiliation[2]{Didi Voyager Labs, DiDi Autonomous Driving}
\contribution[\dagger]{Equal contribution}
\contribution[\mbox{\textnormal{\Letter}}]{Corresponding author}

\abstract{

\begin{abstract}

Reinforcement Learning (RL) has significantly improved large language model reasoning, but existing RL fine-tuning methods rely heavily on heuristic techniques such as entropy regularization and reweighting to maintain stability. In practice, they often suffer from late-stage performance collapse, leading to degraded reasoning quality and unstable training.
We identify a key factor behind this instability: a small fraction of tokens, termed spurious tokens (around 0.01\%), which contribute little to the reasoning outcome but receive disproportionately amplified gradient updates due to inheriting the full sequence-level reward.
We present a unified framework for evaluating token-level optimization impacts across spurious risk, gradient norms, and entropy changes.
Building on the analysis of token characteristics that severely disrupt optimization, we propose the Silencing Spurious Tokens (S2T) mechanism to efficiently suppress their gradient perturbations.
Incorporating this mechanism into a group-based objective, we propose Spurious-Token-Aware Policy Optimization (STAPO), which promotes stable and effective large-scale model refinement.
Across six mathematical reasoning benchmarks using Qwen 1.7B, 8B, and 14B base models, STAPO consistently demonstrates superior entropy stability and achieves an average performance improvement of 11.49\% ($\rho_{\mathrm{T}}$=1.0, top-p=1.0) and 3.73\% ($\rho_{\mathrm{T}}$=0.7, top-p=0.9) over GRPO, 20-Entropy, and JustRL.

\end{abstract}

}

\date{February 17, 2026}
\correspondence{
S. E. Li and J. Duan with email \email{lishbo@tsinghua.edu.cn}.
}
\begin{document}

\maketitle

\vspace{-2em}

% \begin{figure}[htbp]
%     \centering
%     \hspace*{-1cm}\includegraphics[width=0.9\linewidth]{figures/intro.pdf}
%     \vspace{-2mm}
% \caption{\textbf{Core Idea.}
% (a) Conceptual analogy: We argue that spurious tokens, which are rare and uninformative tokens within otherwise correct responses that receive disproportionately large gradient updates, can harm training stability, analogous to a dissonant vocalist disrupting the harmony of a performance.
% (b) By masking this negligible fraction (near $ 0.01\%$) of spurious tokens {during the RL process of} Qwen3-8B-Base, STAPO approaches the Pareto frontier of performance ({AIME24} Acc) and entropy stability, compared to GRPO, 20-Entropy, and JustRL.}

%     \vspace{-6mm}\label{fig:intro}
% \end{figure}
%不需要目录就注释掉 注意目录不要和第一页放在一块 要有\newpage
%\newpage
%\tableofcontents
%\newpage

% \vspace{-2em}
% \noindent\makebox[\textwidth]{
% \parbox{0.8\textwidth}{\centering
% \textit{``All that glisters is not gold.''}\\[0.5em]
% \hfill --- William Shakespeare, \textit{The Merchant of Venice}
% }}

% \newpage
\begin{figure}[htbp]
    \centering
        \vspace{-4mm}
    \hspace*{-1cm}\includegraphics[width=0.9\linewidth]{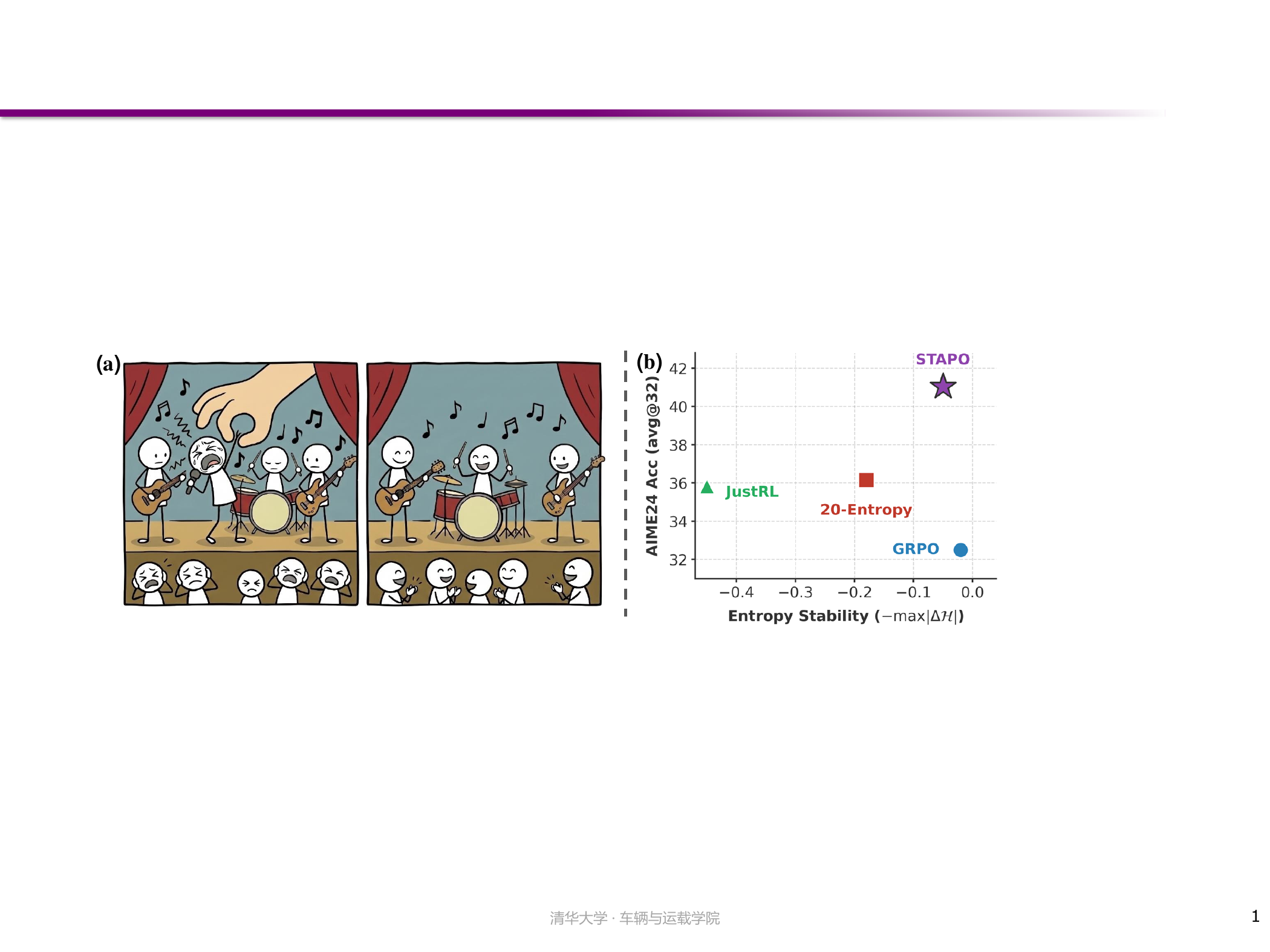}
    \vspace{-2mm}
\caption{\textbf{Core Idea.}
(a) In RLVR training, we argue that spurious tokens, which are rare and uninformative tokens within otherwise correct responses, can harm training stability, analogous to a dissonant vocalist disrupting the harmony of a performance.
(b) By masking this negligible fraction (near $ 0.01\%$) of spurious tokens during the RL process of Qwen3-8B-Base, STAPO approaches the Pareto frontier of AIME24 Acc and entropy stability, compared to GRPO, 20-Entropy, and JustRL.}

    \vspace{-4mm}\label{fig:intro}
\end{figure}

\section{Introduction}
\label{Section:intro}
Recent large language models (LLMs), including OpenAI-o1~\cite{gpt4}, DeepSeek-R1~\cite{guo2025deepseek}, and Qwen3~\cite{yang2025qwen3}, have demonstrated remarkable capabilities in complex reasoning domains such as mathematics and programming. Central to this success is Reinforcement Learning (RL)~\cite{li2023reinforcement}, which optimizes directly for outcome-level correctness and has been empirically linked to the emergence of advanced, long-horizon reasoning behaviors~\cite{cao2024survey}.

Existing approaches have largely focused on enhancing exploration to improve the learning capacity of LLMs under RL training. For example, DAPO~\cite{yu2025dapo} increases the clipping threshold, while SAPO~\cite{gao2025soft} adopts a soft-clipping strategy. Other works instead emphasize the exploration of forking tokens, as introduced in 20-Entropy~\cite{wang2025beyond}. However, in practice, these approaches often drive entropy toward explosive growth, ultimately leading to catastrophic degradation, where models collapse from coherent reasoning into shallow, repetitive, or even nonsensical outputs~\cite{zhang2025survey}.

To mitigate such entropy-induced instability, prior efforts have introduced various stabilization techniques, including entropy regularization~\cite{cui2025entropy,yang2025entropic}, sample augmentation~\cite{simoni2025gtpo,qiu2025noisygrpo}, and advantage reweighting~\cite{yang2025not}. These methods regulate entropy at a global level but remain coarse-grained and fail to capture the heterogeneous roles of individual tokens during optimization. As a result, they may either over-suppress useful exploration or induce oscillatory entropy dynamics, leading to suboptimal performance.

In this work, we aim to strike a balance between effective exploration and strong reasoning capabilities. We first introduce the concept of \emph{spurious tokens}, defined as a sparse subset of tokens within otherwise accurate responses that, rather than contributing to the underlying reasoning process, introduce logical misdirection or harmful noise, as illustrated in Figure~\ref{fig:intro}. From a token-level perspective, we systematically analyze the detrimental impact of these spurious tokens on optimization dynamics, demonstrating that they introduce misleading and destabilizing update signals. Building on this insight, we propose an efficient method for identifying spurious tokens and formulate Spurious-Token-Aware Policy Optimization (STAPO). By masking a negligible fraction (approximately $0.01\%$) of such tokens during training, STAPO significantly stabilizes policy entropy and consistently yields performance improvements. Overall, our main contributions are summarized as follows:

\begin{itemize}
\item We identify spurious tokens as a key source of training instability: a minor fraction of tokens (around $0.01\%$) that contribute little to the actual reasoning yet receive disproportionately large gradient updates by inheriting the full sequence-level reward.
To analyze this issue, we present a unified framework that systematically evaluates token-level optimization dynamics in terms of spurious risk, gradient norms, and entropy changes.

\item By analyzing token characteristics that severely disrupt optimization, we propose the Silencing Spurious Tokens (S2T) mechanism to efficiently identify such disruptive tokens and suppress their gradient perturbations. We further integrate S2T into a group-based objective and develop STAPO for stable and effective large-scale model refinement.

\item 
Across six mathematical reasoning benchmarks (AIME24, AIME25, AMC23, MATH500, Minerva, and OlympiadBench) and three model scales (Qwen 1.7B, 8B, and 14B base models~\cite{yang2025qwen3}), STAPO substantially stabilizes policy entropy and consistently improves reasoning accuracy under diverse evaluation settings.

\end{itemize}

\section{Preliminaries}

\subsection{Problem Formulation}\label{sec:problem_for}
We consider the problem of fine-tuning large language models (LLMs) via reinforcement learning (RL) for reasoning tasks. Let $\mathcal{D}$ denote a distribution over input prompts. Given a prompt $\bm{x} \sim \mathcal{D}$, an LLM parameterized by $\theta$ acts as a stochastic policy $\pi_\theta$ that autoregressively generates an output sequence $\bm{y} = (y_1, \dots, y_T)$. Specifically, at each step $t$, a token $y_t \in \mathcal{V}$ is sampled according to $\pi_\theta(y_t \mid \bm{x}, \bm{y}_{<t})$.

Supervision is provided via a sparse, sequence-level verifiable reward $R(\bm{x}, \bm{y}) \in \{-1, 1\}$, which evaluates the correctness of the generated sequence $\bm{y}$ using an external verifier (e.g., a code compiler or a mathematical rule checker). The optimization objective is to maximize the expected reward:
\begin{equation}
\mathcal{J}(\theta) = \mathbb{E}_{\bm{x} \sim \mathcal{D},\, \bm{y} \sim \pi_\theta(\cdot \mid \bm{x})} \left[ R(\bm{x}, \bm{y}) \right].
\end{equation}

\subsection{Group Relative Policy Optimization (GRPO)}
We briefly review Group Relative Policy Optimization (GRPO)~\cite{shao2024deepseekmath}, which estimates advantages without relying on an explicit value function. For each prompt $\bm{x}$, GRPO samples a group of $G$ output sequences $\{\bm{y}_1, \dots, \bm{y}_G\}$ from a reference behavior policy $\pi_{\theta_{\mathrm{old}}}$. The optimization objective is defined as the average clipped surrogate loss over the sampled group:
\begin{equation}\label{eq:grpo_loss}
\begin{aligned}
\mathcal{J}_{\text{GRPO}}(\theta) &= \mathbb{E}_{\bm{x}\sim \mathcal{D}, \{\bm{y}_i\}_{i=1}^G\sim\pi_{\theta_{\mathrm{old}}}(\cdot \mid \bm{x})} \Bigg[ \frac{1}{G} \sum_{i=1}^{G} \frac{1}{|\bm{y}_i|} \sum_{t=1}^{|\bm{y}_i|} \min \biggl( \rho_{i,t}(\theta) \hat{A}_i, \\
&\quad \mathrm{clip}\bigl(\rho_{i,t}(\theta), 1-\epsilon, 1+\epsilon\bigr)\hat{A}_i \biggr) \Bigg] -\beta \mathbb{D}_\mathrm{KL}(\pi_\theta \mid \pi_\mathrm{ref}),
\end{aligned}
\end{equation}
\begin{equation}\label{eq:ratio_adv_def}
\begin{aligned}
    \rho_{i,t}(\theta) &= \frac{\pi_\theta(y_{i,t} \mid \bm{x}, \bm{y}_{i,<t})}{\pi_{\theta_{\mathrm{old}}}(y_{i,t} \mid \bm{x}, \bm{y}_{i,<t})}, \quad
    \hat{A}_i = \frac{R(\bm{x}, \bm{y}_i) - \operatorname{mean}(\{R(\bm{x}, \bm{y}_j)\}_{j=1}^G)}{\operatorname{std}(\{R(\bm{x}, \bm{y}_j)\}_{j=1}^G)},
\end{aligned}
\end{equation}
where $\rho_{i,t}(\theta)$ is the importance sampling ratio and $\hat{A}_i$ is the sequence-level advantage signal, derived by standardizing the reward across the $G$ samples within the group.
The $\pi_\mathrm{ref}$ serves as the reference policy, and $\beta$ is the scaling coefficient.

\subsection{Clip-Higher and Token Normalization}
Building upon the GRPO framework, several large-scale RL algorithms have recently emerged. Notably, DAPO~\cite{yu2025dapo} removes the KL penalty and introduces a set of training enhancements, with token-level normalization and an asymmetric clip-higher mechanism proving particularly effective for stabilizing optimization. Subsequent work, such as JustRL~\cite{he2025justrl}, has adopted these two components and demonstrated strong empirical performance.
The corresponding objective is written as:
\begin{equation}\label{eq:dapo_loss}
\begin{aligned}
\mathcal{J}_{\text{DAPO}}(\theta) &= \mathbb{E}_{\bm{x}\sim \mathcal{D}, \{\bm{y}_i\}_{i=1}^G\sim\pi_{\theta_{\mathrm{old}}}(\cdot \mid \bm{x})} \Bigg[ \frac{1}{\sum_{i=1}^G |\bm{y}_i|} \sum_{i=1}^{G} \sum_{t=1}^{|\bm{y}_i|} \min \biggl( \rho_{i,t}(\theta) \hat{A}_{i}, \\
&\quad \mathrm{clip}\bigl(\rho_{i,t}(\theta), 1-\epsilon_\mathrm{low}, 1+\epsilon_\mathrm{high}\bigr) \hat{A}_{i} \biggr) \Bigg],
\end{aligned}
\end{equation}
where $\epsilon_\mathrm{low}$ and $\epsilon_\mathrm{high}$ denote the asymmetric clipping parameters integral to the clip-higher mechanism. Given its simplicity and robust empirical success, we adopt this augmented configuration as our baseline objective.

\section{Methodology}
\label{sec:method}
\subsection{The Hidden Threat: Spurious Tokens}\label{subsec:spurious_def}

As discussed in Section~\ref{sec:problem_for}, , rewards in Reinforcement Learning with Value-Regularized (RLVR) are typically derived solely from the final outcome. Consequently, all tokens $y_{i,t}$ within a given trajectory share an identical sequence-level advantage, $\hat{A}_i$. This coarse-grained credit assignment can inadvertently reinforce extraneous tokens. We formalize this phenomenon as follows:

\begin{definition}[Spurious Tokens]
Spurious tokens are intermediate tokens $y_{i,t}$ that contribute negligibly to the correct reasoning outcome, yet receive disproportionately large positive updates.
\end{definition}

To empirically examine this effect, we train a Qwen3-1.7B base model under the JustRL setting on DAPO-MATH-17K~\cite{yu2025dapo}, and record all generated tokens along with their associated statistics during training. Figure~\ref{fig:spurious_examples} presents concrete instances of this phenomenon, showing that spurious tokens can induce misleading update signals that steer the policy toward detrimental directions and destabilize training.

Motivated by this observation, we propose Spurious-Token-Aware Policy Optimization (STAPO), which introduces a binary mask, $\mathbb{I}^{\mathrm{Spurious}}_{i,t}$, to discard gradient contributions from spurious tokens:
\begin{equation}\label{eq:spurious_mask}
\mathbb{I}^{\mathrm{Spurious}}_{i,t} = 
\begin{cases} 
0, & \text{if } y_{i,t} \in \mathcal{S}_i, \\
1, & \text{otherwise},
\end{cases}
\end{equation}
where $\mathcal{S}_i$ denotes the set of identified spurious tokens within the $i$-trajectory. Incorporating this, the STAPO objective is formulated as:
\begin{equation}\label{eq:stapo_loss}
\begin{aligned}
\mathcal{J}_{\mathrm{STAPO}}(\theta) &= \mathbb{E}_{\bm{x}\sim \mathcal{D}, \{\bm{y}_i\}_{i=1}^G\sim\pi_{\theta_{\mathrm{old}}}(\cdot \mid \bm{x})} \Bigg[ \frac{1}{\sum_{i=1}^{G} \sum_{t=1}^{|\bm{y}_i|} \mathbb{I}^{\mathrm{Spurious}}_{i,t}} \sum_{i=1}^{G} \sum_{t=1}^{|\bm{y}_i|} \mathbb{I}^{\mathrm{Spurious}}_{i,t} \cdot \min \biggl( \rho_{i,t}(\theta) \hat{A}_{i}, \\
&\quad \mathrm{clip}\bigl(\rho_{i,t}(\theta), 1-\epsilon_\mathrm{low}, 1+\epsilon_\mathrm{high}\bigr) \hat{A}_{i} \biggr) \Bigg].
\end{aligned}
\end{equation}

The terms $\rho_{i,t}$ and $\hat{A}_i$ follow standard definitions provided in \eqref{eq:ratio_adv_def}. Comparing the STAPO objective in Eq.~\eqref{eq:stapo_loss} with the standard DAPO objective in Eq.~\eqref{eq:dapo_loss}, two primary distinctions emerge. First, STAPO leverages this binary mask to selectively zero out the loss calculations for spurious tokens. Second, the normalization term in Eq.~\eqref{eq:stapo_loss} is dynamically adjusted to average the loss exclusively over the remaining valid tokens.

\begin{figure*}[t!]
  \centering
  
  % =======================
  % 左侧列：包含一个单独的大图 (原右侧图)
  % =======================
  \begin{minipage}[b]{0.605\linewidth} 
    
    % --- 子图 (a): Spurious Tokens 示例 ---
    \begin{subfigure}[b]{\linewidth}
      \centering
      \includegraphics[width=\linewidth]{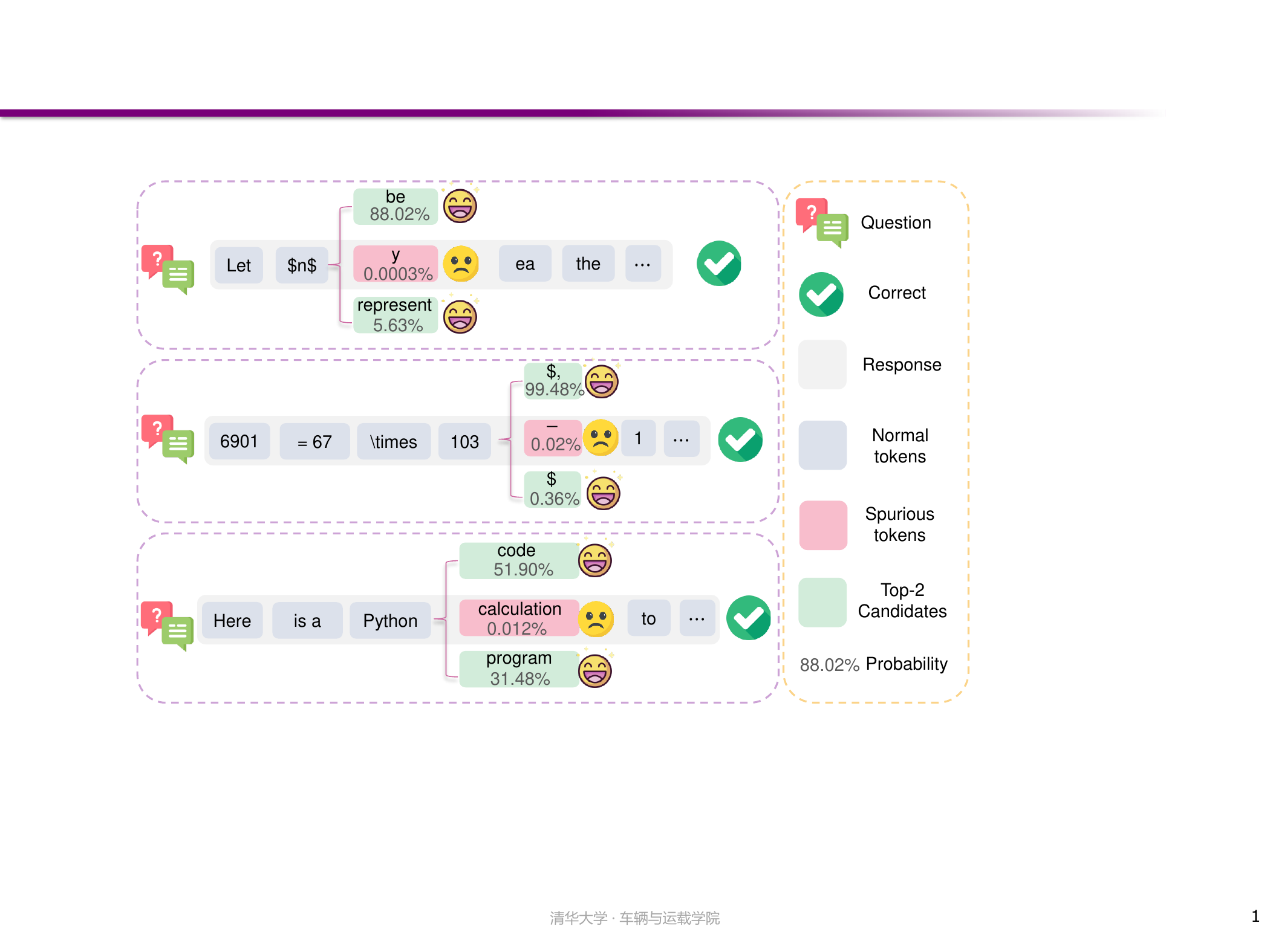}
      \caption{{Illustrative examples of spurious tokens.} Three instances where the LLM selects spurious tokens (pink) over reasonable top-2 candidates (green). Spurious tokens often represent semantic errors despite the final response's correctness. More comprehensive examples are provided in Appendix~\ref{app:spurious_examples}.}
      \label{fig:spurious_examples}
    \end{subfigure}
    
  \end{minipage}
  \hfill 
  % =======================
  % 右侧列：包含两个竖排的子图 (原左侧图)
  % =======================
  \begin{minipage}[b]{0.385\linewidth} 
    
    % --- 子图 (b): 散点图 (Token Quadrants) ---
    \begin{subfigure}[b]{\linewidth}
      \centering
      \includegraphics[width=\linewidth]{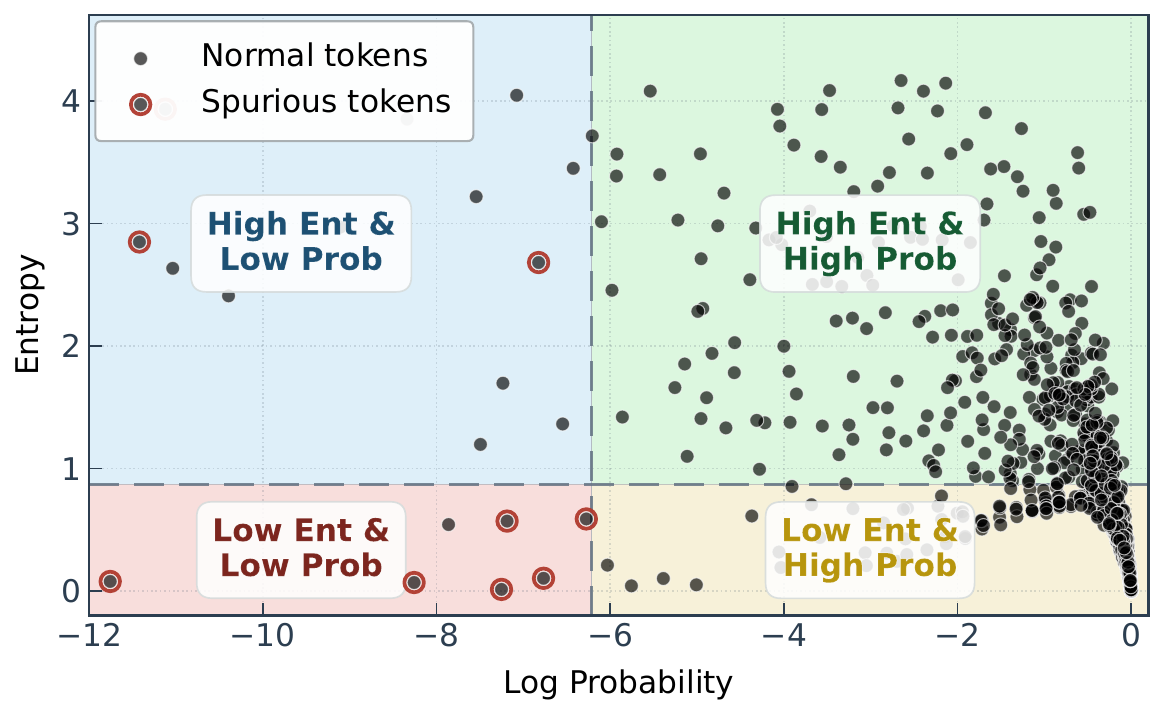}
      \vspace{-6mm} 
      \caption{Token quadrants visualization} 
      \label{fig:quadrants}
    \end{subfigure}
    
    % --- 子图 (c): 柱状图 (Mean Gradient) ---
    \begin{subfigure}[b]{\linewidth}
      \centering
      \includegraphics[width=\linewidth]{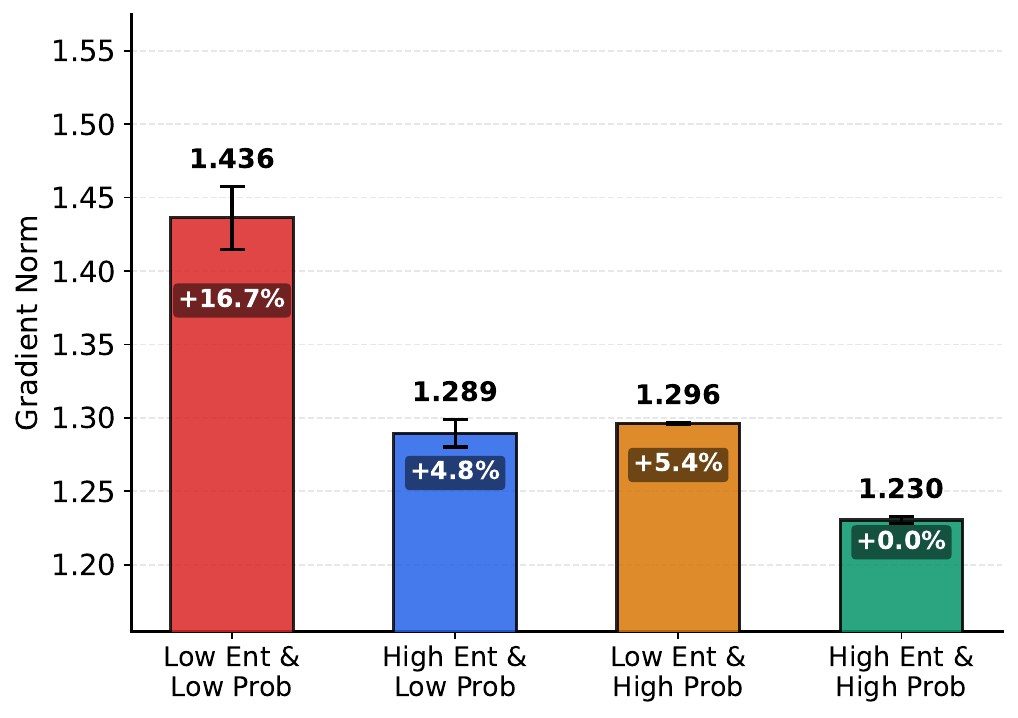}
      \vspace{-6mm}
      \caption{Gradient norm comparison}
      \label{fig:mean_grad}
    \end{subfigure}
    
  \end{minipage}
  \vspace{-4mm}
  \caption{\textbf{Comprehensive Analysis of Spurious Tokens.}}
  \label{fig:combined_analysis}
  \vspace{-4mm}
\end{figure*}

\subsection{Token-Level Optimization Analysis}
\label{subsec:eg_coupling_analysis}

Fundamentally, the generation dynamics of LLMs are shaped by pre-training, which assigns lower prior probabilities to ill-formed or semantically anomalous token sequences. 
As a result, spurious tokens, often appearing as incoherent or illogical reasoning steps, tend to exhibit \emph{low sampling probabilities}. Motivated by this, we extend JustRL with full masking of low-probability tokens within correct answers (JustRL-FullMask) and compare its entropy dynamics with the default one. As shown in Figure~\ref{fig:entropy_aime24}, JustRL-FullMask leads to severe entropy collapse, resulting in insufficient exploration and degraded performance, while the default JustRL update maintains persistently high policy entropy in later stages, hindering convergence. This trade-off exposes a paradox in entropy control, explained by the following update dynamics:

\begin{lemma}[Entropy Update Mechanism {\cite{xi2025bapo}}]
\label{lemma:entropy_pg_general}
Consider the language policy $\pi_\theta(\cdot)$ updated via a natural policy gradient step with learning rate $\eta$. Let $y'$ denote a generic token variable distributed according to the current policy $\pi_\theta(\cdot \mid \bm{x}, \bm{y}_{i,<t})$ at step $t$. The change in entropy $\Delta\mathcal{H}(y_{i,t})$ associated with a specific, already-sampled token $y_{i,t}$ between two consecutive policy iterates satisfies:
\begin{equation}
\begin{aligned}
    \Delta\mathcal{H}(y_{i,t}) \approx &-\eta \cdot \pi_\theta(y_{i,t} \mid \bm{x}, \bm{y}_{i,<t}) \cdot \Bigl[ \log \pi_\theta(y_{i,t} \mid \bm{x}, \bm{y}_{i,<t}) \\
    &- \mathbb{E}_{y' \sim \pi_\theta} \bigl[ \log \pi_\theta(y' \mid \bm{x}, \bm{y}_{i,<t}) \bigr] \Bigr] \cdot \Bigl[ \hat{A}_i(y_{i,t}) - \mathbb{E}_{y' \sim \pi_\theta} \bigl[ \hat{A}_i(y') \bigr] \Bigr].
\end{aligned}
\end{equation}
\end{lemma}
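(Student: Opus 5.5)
The plan is the standard softmax-logit argument of \citet{cui2025entropy}, carried out at a single decoding position. Fix the context $s=(\bm{x},\bm{y}_{i,<t})$ and assume the policy at this state is a softmax over logits $z_{s,\cdot}$, $\pi_\theta(a\mid s)\propto\exp(z_{s,a})$, with the logits treated as the free parameters (the tabular idealization). The entropy is $\mathcal{H}(s)=-\mathbb{E}_{a\sim\pi}[\log\pi(a\mid s)]$. A first-order Taylor expansion in the logits gives
\begin{equation}
\Delta\mathcal{H}(s)\approx\langle\nabla_{z}\mathcal{H}(s),\,\Delta z_{s}\rangle .
\end{equation}

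\textbf{Step 1: differentiate the entropy.} Use $\partial\log\pi(a)/\partial z_{a'}=\mathbb{1}[a=a']-\pi(a')$. This yields
\begin{equation}
\partial\mathcal{H}/\partial z_{a}=-\pi(a)\bigl(\log\pi(a)-\mathbb{E}_{y'\sim\pi}[\log\pi(y')]\bigr).
\end{equation}
The term $\pi(a)\bigl(\log\pi(a)-\mathbb{E}[\log\pi]\bigr)$ is exactly the first two bracketed factors in the statement, evaluated at $a=y_{i,t}$.

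\textbf{Step 2: natural policy gradient update of the logits.} For a softmax policy, the natural gradient step with learning rate $\eta$ is the known result
\begin{equation}
\Delta z_{s,a}=\eta\bigl(A(s,a)-c\bigr),
\end{equation}
where $c$ is any state-dependent constant, since a uniform shift of the logits leaves the softmax unchanged. In RLVR the advantage attached to token $a$ is the sequence-level $\hat{A}_i(a)$. Choosing the baseline $c=\mathbb{E}_{y'\sim\pi}[\hat{A}_i(y')]$ gives $\Delta z_{s,a}=\eta\bigl(\hat{A}_i(a)-\mathbb{E}_{y'}[\hat{A}_i(y')]\bigr)$.

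\textbf{Step 3: combine and attribute per token.} Substituting Steps 1 and 2 into the Taylor expansion gives
\begin{equation}
\Delta\mathcal{H}(s)\approx-\eta\sum_{a}\pi(a)\bigl(\log\pi(a)-\mathbb{E}[\log\pi]\bigr)\bigl(\hat{A}_i(a)-\mathbb{E}[\hat{A}_i]\bigr),
\end{equation}
which is $-\eta$ times a covariance under $\pi$. The contribution of the already-sampled token $y_{i,t}$ is the $a=y_{i,t}$ summand of this sum, and that summand is precisely the stated expression for $\Delta\mathcal{H}(y_{i,t})$.

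\textbf{Main obstacle.} The delicate step is justifying this per-token attribution. Entropy is a property of the state, and the other vocabulary entries are never sampled, so they carry no observed advantage. I would handle this by defining $\Delta\mathcal{H}(y_{i,t})$ as the summand for $a=y_{i,t}$ in the covariance decomposition, so that summing over $a$ recovers the exact first-order change $\Delta\mathcal{H}(s)$.

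Two further points need care, and I would state both as explicit modeling assumptions in the approximation:
\begin{itemize}
\item The natural gradient is invariant to the choice of baseline; I would verify this through the softmax Fisher matrix, whose null space is the all-ones direction.
\item The claim ignores higher-order terms in $\eta$ and the parameter sharing across states in the actual network.
\end{itemize}
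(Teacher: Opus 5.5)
Your proposal is correct and follows the same route as the source: the paper gives no proof of its own and imports the lemma from \cite{xi2025bapo}, which uses the tabular-softmax argument of \cite{cui2025entropy} (natural-gradient logit update $\Delta z_{s,a}=\eta A(s,a)$ and a first-order expansion giving $\Delta\mathcal{H}\approx-\eta\,\mathrm{Cov}_{y'\sim\pi_\theta}(\log\pi_\theta(y'),\hat{A}_i(y'))$), and the stated per-token expression is exactly the $y'=y_{i,t}$ summand of that covariance, as you derive. One precision: the Fisher null-space argument only shows that the policy update and the total $\Delta\mathcal{H}$ are independent of the shift $c$, while the individual summand does depend on $c$, so the per-token quantity is fixed by your explicit choice $c=\mathbb{E}_{y'\sim\pi_\theta}[\hat{A}_i(y')]$ (the covariance convention), not by the invariance itself.
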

% % {r} 表示图片靠右 (Right)
% % {0.45\textwidth} 是图片占据的宽度，您可以根据需要调整
\begin{wrapfigure}{r}{0.46\textwidth}
  \vspace{-0mm} % 微调图片顶部的多余留白（视情况增减）
  \centering
  \includegraphics[width=\linewidth]{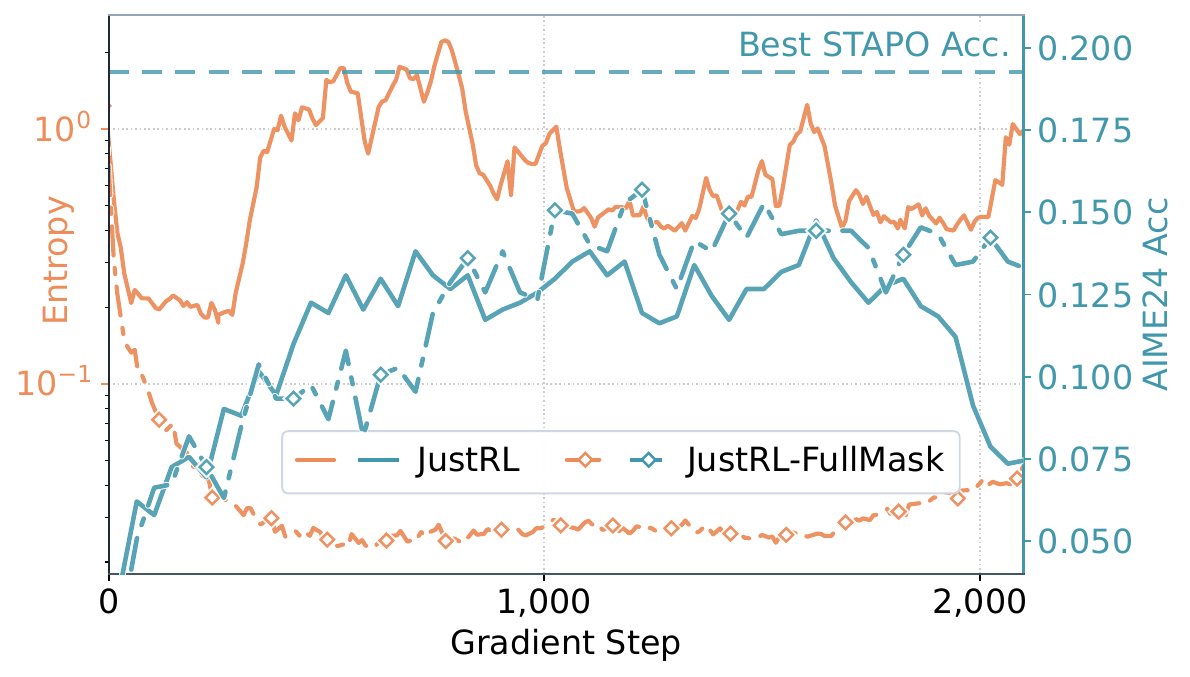}
  \vspace{-3mm}
\caption{\textbf{Training curves of entropy and AIME24 accuracy.} JustRL-FullMask denotes a variant of JustRL that masks all low-probability tokens within correct responses.}
  \label{fig:entropy_aime24}
  \vspace{-4mm} % 微调图片底部的多余留白
\end{wrapfigure}

As formalized in Lemma~\ref{lemma:entropy_pg_general}, low-probability tokens within correct responses induce positive entropy updates, thereby sustaining policy exploration. 
This trade-off implies that the key challenge is to selectively remove truly detrimental spurious tokens among low-probability tokens while preserving stable exploration.

To operationalize this distinction, we partition the representation space into four quadrants based on token statistics (Figure~\ref{fig:quadrants}). Spurious tokens predominantly occupy the low-probability regime, demonstrating a clear concentration within low-entropy states. We attribute this to the underlying generation dynamics: low-probability tokens in high-entropy states generally signify legitimate exploration and are therefore structurally reasonable. Conversely, low-entropy (high-confidence) states inherently possess highly probable, valid candidates; the realization of a low-probability token in such contexts is merely an artifact of random sampling, rendering it anomalous and significantly elevating the risk of spurious generation.

Furthermore, we analyze the gradient norm at the token level during training.  Specifically, at decoding step $t$ of sequence $\bm{y}_i$, the LLM produces a logit vector $\bm{a}_{i,t} \in \mathbb{R}^{|\mathcal{V}|}$ over the vocabulary $\mathcal{V}$, inducing the policy distribution $\pi_\theta(\cdot \mid \bm{x}, \bm{y}_{i,<t})$ via the softmax function. We analyze the per-token gradient associated with $y_{i,t}$ as it propagates to intermediate layers.

\begin{theorem}[Policy Gradient Norm Bounds]\label{prop:grad_bounds}
Consider the optimization objective at step $t$ for sample $i$ with target token $y_{i,t}$. 
The squared $\ell_2$-norm of the gradient $\nabla_{\bm{a}_{i,t}} \mathcal{J}$ w.r.t. the logits $\bm{a}_{i,t} \in \mathbb{R}^{|\mathcal{V}|}$ is bounded by the entropy $\mathcal{H}(\pi_{\theta})$ and the target probability $\pi_\theta(y_{i,t} \mid \bm{x}, \bm{y}_{i,<t})$ as follows:
\begin{equation}
\begin{aligned}
    |w_{i,t}|^2 \Bigl( 1 - 2\pi_\theta(y_{i,t} \mid \bm{x}, \bm{y}_{i,<t}) + e^{-\mathcal{H}(\pi_{\theta})} \Bigr) 
    &\leq \|\nabla_{\bm{a}_{i,t}} \mathcal{J}\|^2 \\
    &\leq |w_{i,t}|^2 \Bigl( 2 - 2\pi_\theta(y_{i,t} \mid \bm{x}, \bm{y}_{i,<t}) - C_V \mathcal{H}(\pi_{\theta})^2 \Bigr),
\end{aligned}
\end{equation}
where $C_V = \frac{|\mathcal{V}| - 1}{|\mathcal{V}| (\ln |\mathcal{V}|)^2}$ and the scaling weight $w_{i,t}$ is defined as:
\begin{equation}
w_{i,t} = 
\begin{cases} 
0, & \mathrm{if } (\hat{A}_i > 0 \land \rho_{i,t} > 1 + \epsilon_\mathrm{high}) \lor (\hat{A}_i < 0 \land \rho_{i,t} < 1 - \epsilon_\mathrm{low}), \\
\frac{\pi_\theta(y_{i,t} \mid \bm{x}, \bm{y}_{i,<t})}
{\pi_{\theta_{\mathrm{old}}}(y_{i,t} \mid \bm{x}, \bm{y}_{i,<t})} \hat{A}_i, & \mathrm{otherwise}.
\end{cases}
\end{equation}
\end{theorem}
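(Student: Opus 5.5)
The plan is to compute the per-token gradient with respect to the logits in closed form and then bound its squared norm using two elementary inequalities linking $\sum_k \pi_k^2$ to the entropy. Write $\pi_k = \pi_\theta(k \mid \bm{x}, \bm{y}_{i,<t})$ for $k \in \mathcal{V}$, write $p = \pi_{y_{i,t}}$, and let $\bm{e}_{y}$ be the one-hot vector of the target token.

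\textbf{Step 1: closed form.} The per-token term of the clipped surrogate in Eq.~\eqref{eq:dapo_loss} (or Eq.~\eqref{eq:stapo_loss} with unit mask) is $\min(\rho_{i,t}\hat{A}_i, \mathrm{clip}(\rho_{i,t},1-\epsilon_\mathrm{low},1+\epsilon_\mathrm{high})\hat{A}_i)$. Its derivative in $\rho_{i,t}$ is $0$ exactly in the two clipped regimes listed in the definition of $w_{i,t}$, and $\hat{A}_i$ otherwise. Since $\nabla_{\bm{a}}\rho_{i,t} = \rho_{i,t}\nabla_{\bm{a}}\log \pi_{y_{i,t}}$ and the softmax identity gives $\nabla_{\bm{a}}\log\pi_{y} = \bm{e}_y - \bm{\pi}$, the chain rule yields
\begin{equation}
\nabla_{\bm{a}_{i,t}}\mathcal{J} = w_{i,t}\,(\bm{e}_{y_{i,t}} - \bm{\pi}).
\end{equation}
Normalization constants such as $1/\sum|\bm{y}_i|$ are absorbed or ignored, as the statement implicitly does. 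Expanding the square then gives
\begin{equation}
\|\nabla_{\bm{a}_{i,t}}\mathcal{J}\|^2 = |w_{i,t}|^2\Bigl(1 - 2p + \textstyle\sum_k \pi_k^2\Bigr).
\end{equation}
So the whole task reduces to two-sided bounds on the collision probability $S=\sum_k\pi_k^2$ in terms of $\mathcal{H}$.

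\textbf{Step 2: lower bound.} Apply Jensen to the concave $\log$: $\log S = \log \mathbb{E}_{k\sim\pi}[\pi_k] \ge \mathbb{E}_{k\sim\pi}[\log \pi_k] = -\mathcal{H}$. Hence $S \ge e^{-\mathcal{H}}$, which is exactly the left inequality. This is the statement that Rényi-2 entropy is at most Shannon entropy.

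\textbf{Step 3: upper bound.} We need $S \le 1 - C_V\mathcal{H}^2$, i.e.\ $1 - S \ge \frac{|\mathcal{V}|-1}{|\mathcal{V}|}\bigl(\mathcal{H}/\ln|\mathcal{V}|\bigr)^2$. This is the main obstacle, since it is a reverse comparison between Shannon entropy and the Gini impurity $1-S$. The first thing I would try is to reduce it to the inequality
\begin{equation}
\frac{1-S}{1-1/|\mathcal{V}|} \;\ge\; \Bigl(\frac{\mathcal{H}}{\ln|\mathcal{V}|}\Bigr)^2 .
\end{equation}
Both sides equal $1$ at the uniform distribution and $0$ at a point mass, so the inequality holds at the extremes. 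To prove it, I would first try the standard bound $\mathcal{H}^2 \le \ldots$ obtained by Cauchy--Schwarz on $\mathcal{H} = \sum_k \pi_k\log(1/\pi_k)$. Failing that, I would use a Schur-concavity argument: fix $\mathcal{H}$ and minimize $1-S$, which pushes the extremal distributions to the form one heavy mass plus $|\mathcal{V}|-1$ equal masses, and then verify the resulting one-parameter inequality directly. If even that is delicate, a cruder route suffices: $1-S \ge 1-\max_k\pi_k$, combined with a Fano-type bound $\mathcal{H} \le h(\max_k\pi_k) + (1-\max_k\pi_k)\ln(|\mathcal{V}|-1)$, should give a bound of the stated quadratic form.

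Substituting Step 3 into the identity from Step 1 then yields the right inequality, $2-2p-C_V\mathcal{H}^2$, which completes the proof.
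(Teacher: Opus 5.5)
Your Steps 1 and 2 match the paper: the identity $\|\nabla\mathcal{J}\|^2=|w_{i,t}|^2(1-2p+S)$ with $S=\sum_k\pi_k^2$, and Jensen, i.e.\ R\'enyi-2 entropy $\le$ Shannon entropy. The gap is Step 3, which is the whole nontrivial content of the upper bound. You list three strategies but carry none out, and one of them provably fails.

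The constant is sharp. At the uniform distribution $1-S=1-1/|\mathcal{V}|$ and $\mathcal{H}=\ln|\mathcal{V}|$, so $S\le 1-C_V\mathcal{H}^2$ holds with equality. Any chain that loses at first order near the uniform point therefore fails, and your ``crude route'' is such a chain. The Fano bound $\Phi(m)=h(m)+(1-m)\ln(|\mathcal{V}|-1)$ is stationary at $m=1/|\mathcal{V}|$, where $\Phi=\ln|\mathcal{V}|$. Meanwhile $(1-m)/C_V$ decreases there with slope $-1/C_V$. Hence $\Phi(m)^2>(1-m)/C_V$ for $m$ slightly above $1/|\mathcal{V}|$; for $|\mathcal{V}|=3$, $m=0.4$ this is $1.186$ versus $1.086$. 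So $\mathcal{H}\le\Phi(m)$ and $1-S\ge 1-m$ cannot be combined into the stated bound.

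The Schur route rests on an unproved claim. Since $\mathcal{H}$ is Schur-concave and $S$ is Schur-convex, majorization alone does not locate the maximizers of $S$ at fixed $\mathcal{H}$. You would need a Lagrange analysis plus a one-parameter verification, and neither is given. Finally, the ``standard'' Cauchy--Schwarz bound $\mathcal{H}^2\le\sum_k\pi_k\ln^2\pi_k$ never produces the factor $1-S$.

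The missing idea is an argument that $F=\mathcal{H}^2/(1-S)$ is maximized at the uniform distribution. The paper uses Lagrange multipliers on the face containing a maximizer:
\begin{itemize}
\item Stationarity reads $g(\pi_k)=\text{const}$ with $g(t)=t/(1-S)-\ln t/\mathcal{H}$, which is strictly convex. So there are at most two distinct nonzero values.
\item A two-value stationary point $x\neq y$ would force $-\ln x/(1-x)=-\ln y/(1-y)$. This is impossible because that function is strictly decreasing on $(0,1)$.
\item The remaining candidates are uniform on $k$ atoms, and $k(\ln k)^2/(k-1)$ is increasing in $k$, so $\max F=1/C_V$.
\end{itemize}

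Alternatively, your Cauchy--Schwarz idea works with the right weights. For non-vertex $\pi$, write $\pi_k\ln(1/\pi_k)=\sqrt{\pi_k(1-\pi_k)}\cdot\sqrt{\pi_k/(1-\pi_k)}\,\ln(1/\pi_k)$. This gives $\mathcal{H}^2\le(1-S)\sum_k\phi(\pi_k)$ with $\phi(t)=t\ln^2 t/(1-t)$. Moreover,
\[
\phi''(t)=\frac{2(\ln t-t+1)(\ln t-1+1/t)}{(1-t)^3}<0 \quad\text{on } (0,1),
\]
because $1-1/t<\ln t<t-1$. Jensen then gives $\sum_k\phi(\pi_k)\le|\mathcal{V}|\,\phi(1/|\mathcal{V}|)=1/C_V$. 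This is shorter than the paper's KKT analysis and avoids the existence and face bookkeeping. Either argument closes the gap, but as written your proposal contains neither.
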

\begin{proof}
See Appendix~\ref{app:grad_bounds_proof} for the detailed derivation.
\end{proof}

Theorem~\ref{prop:grad_bounds} demonstrates that the tokens characterized by both low probability and low entropy induce a larger gradient norm, a phenomenon empirically corroborated in Figure~\ref{fig:mean_grad}. Such tokens are likely spurious, thereby introducing erroneous updates that can destabilize training.

To systematically analyze how different token types influence optimization, we focus on tokens within correct answers, which inherently possess a positive advantage ($\hat{A} > 0$), and categorize them along two binary axes: token probability (high/low) and entropy (high/low). As summarized in Table~\ref{tab:token_analysis}, tokens characterized by \emph{low probability and low entropy} consistently exhibit detrimental optimization effects across all evaluation criteria. First, such tokens are highly likely to be spurious, leading to incorrect update directions. Second, they are associated with disproportionately large gradient norms, thereby amplifying their adverse impact during optimization. Finally, they contribute to entropy explosion in standard training, further exacerbating training instability.
\begin{table}[t]
\centering
\renewcommand{\arraystretch}{1.2} % 行高增加 1/5
\caption{\textbf{Taxonomy of Token-Level Optimization Mechanisms.} 
Left: Token properties. 
Right: Influence on optimization. 
\protect\colorbox{green!20}{Green} indicates beneficial updates, 
\protect\colorbox{yellow!20}{yellow} indicates intermediate updates, 
and \protect\colorbox{red!15}{red} indicates detrimental updates. 
$\tau_p$ and $\tau_h$ represent the thresholds for token probability and entropy, respectively.}
\label{tab:token_analysis}
\resizebox{0.9\textwidth}{!}{
\begin{tabular}{ccc|ccc}
\toprule
\multicolumn{3}{c|}{\textbf{Token Properties}} 
& \multicolumn{3}{c}{\textbf{Influence on Optimization}} \\
\cmidrule(lr){1-3} \cmidrule(lr){4-6}
Advantage & Token Prob. & Entropy 
& Spurious Risk & Gradient Norm & Entropy Change~\cite{xi2025bapo} \\
\midrule

% % 模块 1: Advantage < 0
% \multirow{4}{*}{$\hat A < 0$}
% & $\pi_t \ge \tau_p$ & $\mathcal{H} \ge \tau_h$
% & \cellcolor{red!15}High 
% & \cellcolor{green!20}Low 
% & \cellcolor{red!15}$\uparrow$ \\

% & $\pi_t \ge \tau_p$ & $\mathcal{H} < \tau_h$
% & \cellcolor{yellow!20}Medium 
% & \cellcolor{green!20}Low 
% & \cellcolor{red!15}$\uparrow$ \\

% & $\pi_t < \tau_p$ & $\mathcal{H} \ge \tau_h$
% & \cellcolor{green!20}Low 
% & \cellcolor{yellow!20}Medium 
% & \cellcolor{green!20}$\downarrow$ \\

% & $\pi_t < \tau_p$ & $\mathcal{H} < \tau_h$
% & \cellcolor{green!20}Low 
% & \cellcolor{red!15}High 
% & \cellcolor{green!20}$\downarrow$ \\

% \midrule

% 模块 2: Advantage > 0
\multirow{4}{*}{$\hat A > 0$}
& $\pi_t \ge \tau_p$ & $\mathcal{H} \ge \tau_h$
& \cellcolor{green!20}Low 
& \cellcolor{green!20}Low 
& \cellcolor{green!20}$\downarrow$ \\

& $\pi_t \ge \tau_p$ & $\mathcal{H} < \tau_h$
& \cellcolor{green!20}Low 
& \cellcolor{yellow!20}Medium 
& \cellcolor{green!20}$\downarrow$ \\

& $\pi_t < \tau_p$ & $\mathcal{H} \ge \tau_h$
& \cellcolor{yellow!20}Medium 
& \cellcolor{yellow!20}Medium 
& \cellcolor{red!15}$\uparrow$ \\

% 最后一行，满足交错规律并保持原高亮格式
\rowcolor{gray!10}
& $\pi_t < \tau_p$ & $\mathcal{H} < \tau_h$
& \cellcolor{red!15}\textbf{High} 
& \cellcolor{red!15}\textbf{High} 
& \cellcolor{red!15}$\boldsymbol{\uparrow}$ \\

\bottomrule
\end{tabular}}
\vspace{-4mm}
\end{table}

\subsection{STAPO with the Silencing Spurious Tokens (S2T) Mechanism}

Motivated by Table~\ref{tab:token_analysis}, we propose the \textbf{Silencing Spurious Tokens (S2T)} mechanism, which selectively discards the gradient contributions of tokens that fall into the destructive regime:

\begin{equation}
\label{eq:indicator_s2t}
\mathbb{I}^{\mathrm{S2T}}_{i,t} = 
\begin{cases} 
0, & \mathrm{if}\ \hat{A}_i > 0 \land \pi(y_{i,t}) < \tau_p \land \mathcal{H}_t < \tau_h^{(q)}, \\
1, & \mathrm{otherwise},
\end{cases}
\end{equation}

where $\tau_p$ is a fixed probability threshold that defines an absolute notion of rarity, and $\tau_h^{(q)}$ denotes the $q$-th quantile of entropy, computed dynamically over the low-probability tokens within correct responses in each mini-batch. 

% {r} 表示图片靠右 (Right)
% {0.45\textwidth} 是图片占据的宽度，您可以根据需要调整
\begin{wrapfigure}{r}{0.40\textwidth}
  \vspace{-5mm} % 微调图片顶部的多余留白（视情况增减）
  \centering
  \includegraphics[width=\linewidth]{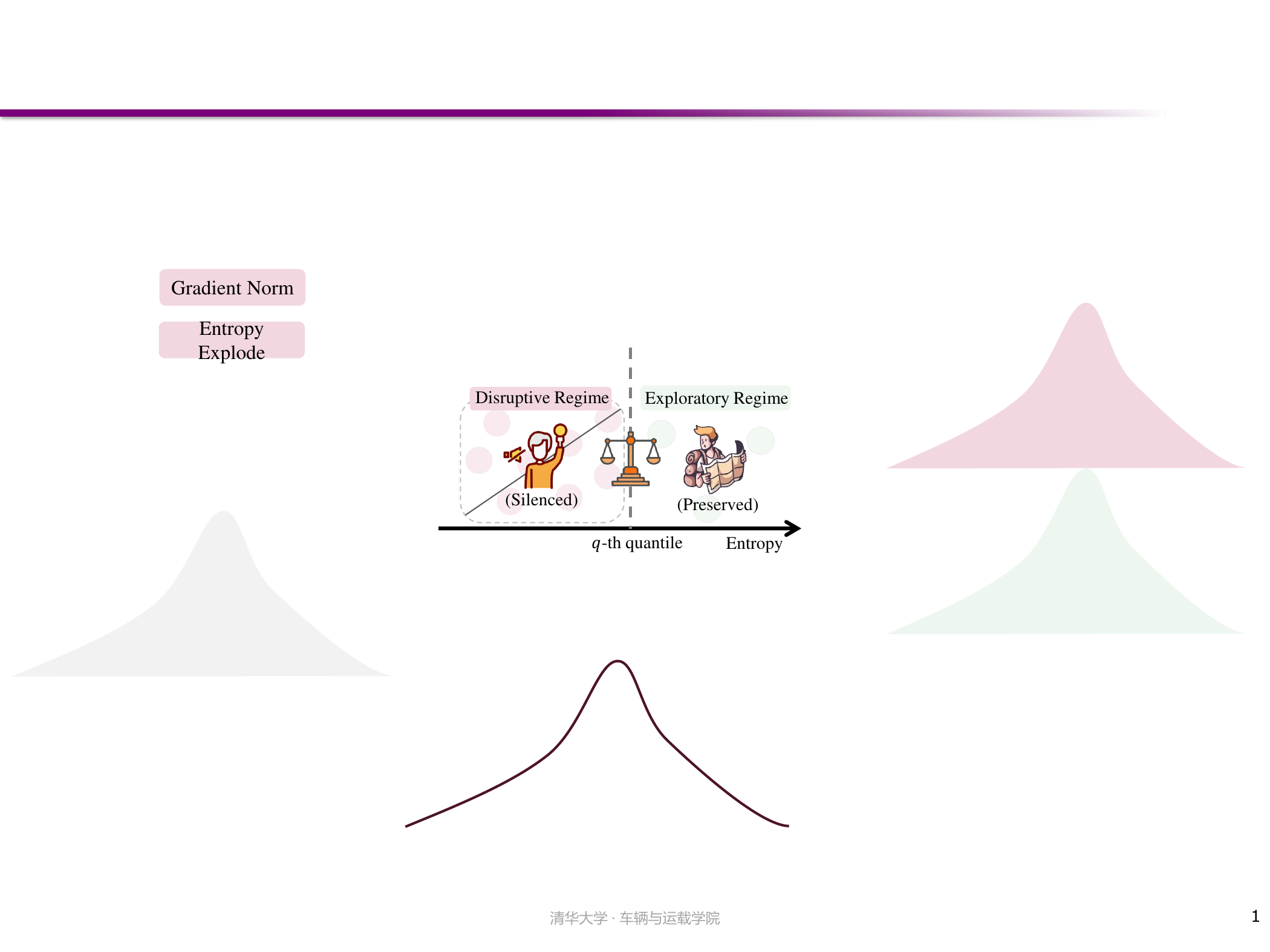}
  \vspace{-3mm}
\caption{\textbf{Illustration of the Entropy Threshold.}}
  \label{fig:exploration}
  \vspace{-4mm} % 微调图片底部的多余留白
\end{wrapfigure}

The core idea is to first identify low-probability tokens within correct responses, which are characterized as exploratory tokens according to Lemma~\ref{lemma:entropy_pg_general}. Among these tokens, we further partition them based on entropy using the quantile threshold $\tau_h^{(q)}$. Tokens in the low-entropy region (i.e., the bottom $q$ fraction), which correspond to the most destructive cases identified in Table~\ref{tab:token_analysis}, are suppressed. In contrast, the remaining high-entropy tokens (the top $1-q$ fraction) are preserved to maintain sufficient exploration, as illustrated in Figure~\ref{fig:exploration}. This adaptive Silencing mechanism achieves a principled balance between stabilizing optimization and preserving exploration capacity.To provide further insight, we include word cloud comparisons for both types of tokens in Appendix~\ref{app:word_cloud}.

In practice, we approximate the ideal mask $\mathbb{I}^{\mathrm{Spurious}}$ using the S2T mechanism, i.e., $\mathbb{I}^{\mathrm{Spurious}}_{i,t} \approx \mathbb{I}^{\mathrm{S2T}}_{i,t}$ in Equation~\eqref{eq:stapo_loss}, resulting in a computationally efficient instantiation of STAPO. The complete training procedure is outlined in Appendix~\ref{app:algorithm}.

\section{Related Work}
\textbf{Reinforcement Learning for LLMs.} 
In the era of small-model training, various effective RL algorithms have been proposed, including TD3~\cite{fujimoto2018addressing}, TRPO~\cite{schulman2015trust}, DSAC~\cite{10858686}, and BOOM~\cite{zhanbootstrap}.
Recently,
Reinforcement Learning from human feedback (RLHF) has emerged as a predominant approach for aligning LLMs with human preferences and diverse downstream objectives~\cite{jaech2024openai, guo2025deepseek}. While early approaches relied on on-policy optimization such as PPO~\cite{schulman2017proximal}, recent work has shifted toward more efficient preference-based Direct Preference Optimization (DPO)~\cite{rafailov2023direct}, which avoids explicit reward modeling and online rollouts. The focus has further expanded from general alignment to improving reasoning capabilities, motivating RL and training schemes such as GRPO~\cite{shao2024deepseekmath}. In this context, a range of policy optimization variants, including DAPO~\cite{yu2025dapo}, GSPO~\cite{zheng2025groupsequencepolicyoptimization}, SAPO~\cite{gao2025soft}, and other related methods~\cite{yue2025vapo, wang2025aspo, qiu2025noisygrpo}, have been developed to enhance optimization stability, sample efficiency, and scalability for reasoning-oriented language models.

\textbf{Entropy Instability in RL.}
Entropy is a central issue in RL, as it directly affects exploration and training stability~\cite{duan2021distributional, wang2024diffusion}. In reasoning-oriented language models trained with RL, a persistent challenge is the rapid collapse of policy entropy during the early stages of optimization, which often leads to premature convergence and degraded performance. Prior work mitigates this issue through interventions such as selectively regularizing high-entropy tokens~\cite{wang2025beyond}, increasing the proportion of entropy-enhancing samples~\cite{xi2025bapo}, and modifying clipping strategies in policy optimization~\cite{yu2025dapo, chen2025minimax, gao2025soft}. However, these methods often introduce the opposite failure mode, where entropy grows excessively or becomes unstable, degrading reasoning coherence and leading to repetitive or unstructured outputs. Although some studies analyze entropy dynamics during RL training~\cite{cui2025entropy, wang2026entropy} and others explicitly enforce entropy stability as an optimization objective~\cite{yang2025entropic}, existing approaches largely treat entropy as a surface-level training signal rather than addressing the underlying sources of instability.

\textbf{Gradient Domination by Low-Probability Tokens.}
Gradients are a key factor in the stability of RL, and various forms of policy gradients have been proposed,
such as the bicriteria policy gradient~\cite{zhan2025bicriteria} and continuous-time policy gradient~\cite{zhan2023continuous}.
A key microscopic source of instability stems from the disproportionate gradient influence of low-probability tokens. As proved by Yang et al.~\cite{yang2025not}, rare tokens can generate excessively large updates, allowing a small subset of unstable predictions to dominate optimization, which harms fine-tuning stability. Recent work addresses this through probability-aware modulation. 
To avoid suppressing informative exploratory signals, Low-Probability Regularization (Lp-Reg)~\cite{huang2025low} filters noise while preserving meaningful rare tokens. However, these approaches largely rely on scalar probability thresholds, lacking a joint, fine-grained treatment of token-level confidence and probability, and thus failing to distinguish useful exploration from aleatoric noise under local model calibration.

\section{Experiments}
\label{sec:experiments}

\subsection{Settings}
\label{subsec:settings}
\noindent \textbf{Baselines.} We compare our approach against several RL algorithms for LLMs, including GRPO~\cite{shao2024deepseekmath}, 20-Entropy~\cite{wang2025beyond}, and JustRL~\cite{he2025justrl}. For all baselines, we follow the parameter settings reported in their original papers. Unless otherwise specified, STAPO uses $\tau_p = 0.002$ and $q = 75\%$ across all experiments. For a fair comparison, we do not apply the dynamic sampling technique introduced in~\cite{yu2025dapo} to any method. Detailed training settings are provided in Appendix~\ref{app:traing_details}.

% \noindent \textbf{Training details.} All experiments are implemented within the \texttt{veRL}~\cite{sheng2025hybridflow} codebase and executed on a cluster of 64$\times$ NVIDIA H20 GPUs. We utilize the DAPO-Math-17K~\cite{yu2025dapo} as the training dataset, where each prompt is formatted with the instruction: ``Please reason step by step, and put your final answer within \textbackslash boxed\{\}''. All models are trained with a total batch size of 256 and a mini-batch size of 64, resulting in 4 gradient updates per rollout step. We utilize the AdamW optimizer with a learning rate of $1 \times 10^{-6}$, complemented by a linear warmup over the initial 10 rollout steps. For each problem, we generate 8 rollouts with a maximum response length of 15k tokens. To ensure consistency in the reward signal, the reward function employs the lightweight, rule-based verifier from DAPO without modification. To investigate the scaling laws.

\noindent \textbf{Benchmarks.} We conduct experiments across three model scales: Qwen 1.7B, 8B, and 14B base models and evaluate on six widely adopted and challenging mathematical reasoning benchmarks: AIME24~\cite{li2024numinamath}, AIME25~\cite{opencompass2025aime}, AMC23~\cite{li2024numinamath}, MATH500~\cite{hendrycks2021math}, Minerva~\cite{lewkowycz2022solving}, and OlympiadBench~\cite{he2024olympiadbench}. We generate $N$ independent responses per problem ($N=4$ for MATH500, Minerva, and OlympiadBench; $N=32$ for others) across two decoding configurations: temperature $\rho_{\mathrm{T}}$=1.0, top-p=1.0 and temperature $\rho_{\mathrm{T}}$=0.7, top-p=0.9 , with a maximum length of 20,480 tokens. In the absence of additional instructions, the default evaluation configuration is set to $\rho_{\mathrm{T}}$=0.7, top-p=0.9. Results are reported as the average accuracy. To ensure evaluation rigor, we employ CompassVerifier-3B~\cite{liu2025compassverifier}, a lightweight LLM verifier, to rectify misclassifications from the rule-based verification.
\subsection{Main Results}\label{subsec:main_results}

\subsubsection{Training Behavior Analysis}
\textbf{Entropy Analysis.}
Entropy curves are a key indicator of learning progress in LLM training. As shown in Figure~\hyperref[fig:main_results_1_7b]{\ref*{fig:main_results_1_7b}(b)}, JustRL and 20-Entropy suffer from entropy explosion, while GRPO exhibits entropy collapse. Conversely, STAPO maintains a stable, well-regulated entropy profile after warmup.
Token-level quantile analysis (Figure~\hyperref[fig:main_results_1_7b]{\ref*{fig:main_results_1_7b}(d)}) explains this stability: STAPO concentrates entropy exclusively at high quantiles (e.g., above the 80th percentile), keeping most tokens deterministic. This strategically preserves exploration for critical tokens while enhancing reasoning precision. In contrast, the uniformly high entropy in JustRL and 20-Entropy triggers repetitive generation, whereas GRPO's persistently low entropy stifles exploration and limits learning capacity

\textbf{Performance Comparison.} Beyond maintaining a stable entropy profile and preserving critical exploratory tokens, STAPO demonstrates superior overall performance. During RL training, as shown in Figure~\hyperref[fig:main_results_1_7b]{\ref*{fig:main_results_1_7b}(c)}, STAPO achieves the highest reward, maintaining a clear advantage over all baselines. This suggests that STAPO successfully explores and acquires reasoning capabilities that other methods fail to capture. Furthermore, Figure~\hyperref[fig:main_results_1_7b]{\ref*{fig:main_results_1_7b}(a)} illustrates that STAPO not only attains the highest accuracy on AIME24 but also sustains strong performance without noticeable degradation even after 3000 steps, evidencing remarkable training stability. 
Ultimately, the stark contrast with JustRL underscores the detrimental impact of spurious tokens in baseline methods, corroborating the validity of our prior analysis in Section~\ref{subsec:eg_coupling_analysis}.

\begin{table}[!t]
\centering 
% Caption 保持不变
\caption{\textbf{Main Results on Six Benchmarks across Three Models.} Each cell reports performance under the training-aligned configuration (temperature $\rho_{\mathrm{T}}$=1.0, top-p=1.0) and the JustRL~\cite{he2025justrl} evaluation setting (temperature $\rho_{\mathrm{T}}$=0.7, top-p=0.9, shown in gray). 
For each configuration, the best result is highlighted in \textbf{bold} (training-aligned setting) and in {\color{darkgray}dark gray} (JustRL setting), respectively.
}

\footnotesize
\label{tab:main_results}

% 处理 booktabs 和 rowcolor 潜在冲突的设置
\setlength{\aboverulesep}{0pt}
\setlength{\belowrulesep}{0pt}

% === 核心对齐设置 ===
\setlength{\tabcolsep}{1pt} % 让 extsep 来完全接管列间距的计算
\renewcommand{\arraystretch}{1.6} % 将原本过大的 1.6 行高降下来，使表格更紧凑

% 使用 tabular* 并配合 \extracolsep{\fill} 实现精确拉伸对齐
\begin{tabular*}{\textwidth}{@{\extracolsep{\fill}} l cccccc c @{}} 
\toprule
\textbf{Baseline} & \textbf{AIME24} & \textbf{AIME25} & \textbf{AMC23} & \textbf{MATH500} & \textbf{Minerva} & \textbf{Olympiad} & \cellcolor{gray!8}\textbf{Avg} \\ \hline

% --- Qwen3-1.7B ---
\rowcolor{headerblue} \multicolumn{8}{c}{\textbf{RL from the Qwen3-1.7B Base Model}} \\ \hline
GRPO       & \tabscore{7.60}{10.42}   & \tabscore{3.33}{3.33}   & \tabscore{37.81}{39.84} & \tabscore{65.90}{66.40} & \tabscore{\textbf{33.73}}{\color{darkgray}34.83} & \tabscore{31.16}{31.42} & \cellcolor{gray!8}\tabscore{29.92}{31.04} \\
20-Entropy & \tabscore{11.98}{17.40} & \tabscore{6.67}{11.67}  & \tabscore{42.66}{47.58} & \tabscore{63.70}{64.90} & \tabscore{21.23}{24.45} & \tabscore{31.34}{32.31} & \cellcolor{gray!8}\tabscore{29.60}{33.05} \\
JustRL     & \tabscore{9.17}{15.73}  & \tabscore{6.67}{9.28}  & \tabscore{43.20}{48.28} & \tabscore{66.25}{70.10} & \tabscore{30.61}{32.08} & \tabscore{30.71}{32.90} & \cellcolor{gray!8}\tabscore{31.10}{34.73} \\ \hline
\textbf{STAPO} & \tabscore{\textbf{18.44}}{\color{darkgray}19.27} & \tabscore{\textbf{10.00}}{\color{darkgray}13.33} & \tabscore{\textbf{54.22}}{\color{darkgray}53.12} & \tabscore{\textbf{73.10}}{\color{darkgray}71.80} & \tabscore{28.40}{26.10} & \tabscore{\textbf{36.94}}{\color{darkgray}36.80} & \cellcolor{gray!8}\tabscore{\textbf{36.85}}{\color{darkgray}36.74} \\ \hline

% --- Qwen3-8B ---
\rowcolor{headerblue} \multicolumn{8}{c}{\textbf{RL from the Qwen3-8B Base Model}} \\ \hline
GRPO       & \tabscore{31.25}{32.50} & \tabscore{24.69}{24.17} & \tabscore{75.23}{74.14} & \tabscore{88.90}{88.85} & \tabscore{55.88}{53.58} & \tabscore{61.50}{58.34} & \cellcolor{gray!8}\tabscore{56.24}{55.26} \\
20-Entropy & \tabscore{31.25}{36.15} & \tabscore{27.50}{27.39} & \tabscore{79.92}{80.00} & \tabscore{89.85}{89.85} & \tabscore{54.78}{54.23} & \tabscore{62.50}{60.39} & \cellcolor{gray!8}\tabscore{57.63}{58.00} \\
JustRL     & \tabscore{25.21}{35.83} & \tabscore{21.98}{26.04} & \tabscore{73.52}{\color{darkgray}81.48} & \tabscore{84.90}{87.55} & \tabscore{47.33}{51.03} & \tabscore{51.26}{56.57} & \cellcolor{gray!8}\tabscore{50.70}{56.42} \\ \hline
\textbf{STAPO} & \tabscore{\textbf{38.33}}{\color{darkgray}41.04} & \tabscore{\textbf{33.33}}{\color{darkgray}29.37} & \tabscore{\textbf{83.12}}{80.23} & \tabscore{\textbf{90.95}}{\color{darkgray}91.40} & \tabscore{\textbf{56.80}}{\color{darkgray}55.70} & \tabscore{\textbf{63.20}}{\color{darkgray}63.32} & \cellcolor{gray!8}\tabscore{\textbf{60.96}}{\color{darkgray}60.18} \\ \hline

% --- Qwen3-14B ---
\rowcolor{headerblue} \multicolumn{8}{c}{\textbf{RL from the Qwen3-14B Base Model}} \\ \hline
GRPO       & \tabscore{41.04}{42.40} & \tabscore{35.00}{26.67} & \tabscore{80.08}{82.81} & \tabscore{91.60}{91.00} & \tabscore{57.35}{56.89} & \tabscore{63.87}{63.39} & \cellcolor{gray!8}\tabscore{61.49}{60.53} \\
20-Entropy & \tabscore{42.08}{48.96} & \tabscore{34.06}{39.48} & \tabscore{84.45}{87.73} & \tabscore{91.65}{92.30} & \tabscore{51.10}{54.78} & \tabscore{61.30}{63.95} & \cellcolor{gray!8}\tabscore{60.77}{64.53} \\
JustRL     & \tabscore{34.48}{50.83} & \tabscore{18.33}{38.33} & \tabscore{76.41}{89.06} & \tabscore{89.10}{93.30} & \tabscore{54.60}{\color{darkgray}59.93} & \tabscore{61.54}{68.81} & \cellcolor{gray!8}\tabscore{55.74}{66.71} \\ \hline
\textbf{STAPO} & \tabscore{\textbf{51.04}}{\color{darkgray}52.81} & \tabscore{\textbf{41.67}}{\color{darkgray}41.15} & \tabscore{\textbf{89.63}}{\color{darkgray}90.78} & \tabscore{\textbf{93.60}}{\color{darkgray}93.70} & \tabscore{\textbf{59.74}}{59.38} & \tabscore{\textbf{70.92}}{\color{darkgray}68.95} & \cellcolor{gray!8}\tabscore{\textbf{67.76}}{\color{darkgray}67.80} \\ \bottomrule
\end{tabular*}
\end{table}

\begin{figure}[t!]
\vspace{-6pt}
    \centering
    \includegraphics[width=1.0\linewidth]{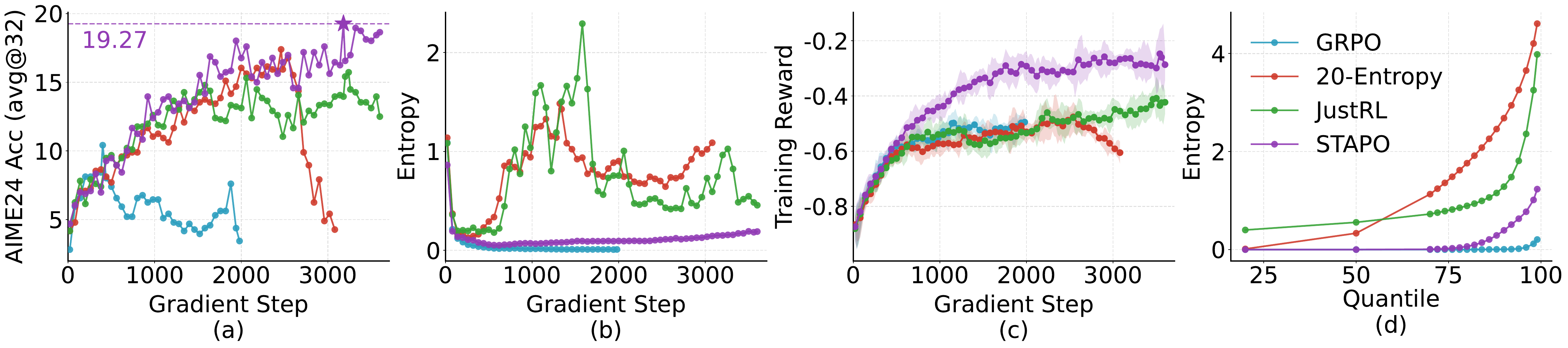} % 1.7B 规模
    \caption{\textbf{Training Curves on the Qwen3-1.7B Base Model.} Notably, STAPO achieves superior performance while maintaining stable policy entropy. Training dynamics for larger models (8B and 14B) exhibit consistent trends and are provided in Appendix~\ref{app:additional_training_dynamics}.}
    \label{fig:main_results_1_7b}
    \vspace{-15pt}
\end{figure}
\subsubsection{Quantitative Results}
Table~\ref{tab:main_results} summarizes performance across different model scales. In the training-aligned setting ($\rho_{\mathrm{T}}$=1.0, top-p=1.0), STAPO demonstrates excellent scalability and enhanced intrinsic reasoning capability, surpassing the strongest baselines with average relative accuracy improvements of 18.49\%, 5.78\%, and 10.20\% at the 1.7B, 8B, and 14B scales, respectively.

% {r} 表示图片靠右 (Right)
% {0.45\textwidth} 是图片占据的宽度，您可以根据需要调整
\begin{wrapfigure}{r}{0.42\textwidth}
  \vspace{-0mm} % 微调图片顶部的多余留白（视情况增减）
  \centering
  \includegraphics[width=\linewidth]{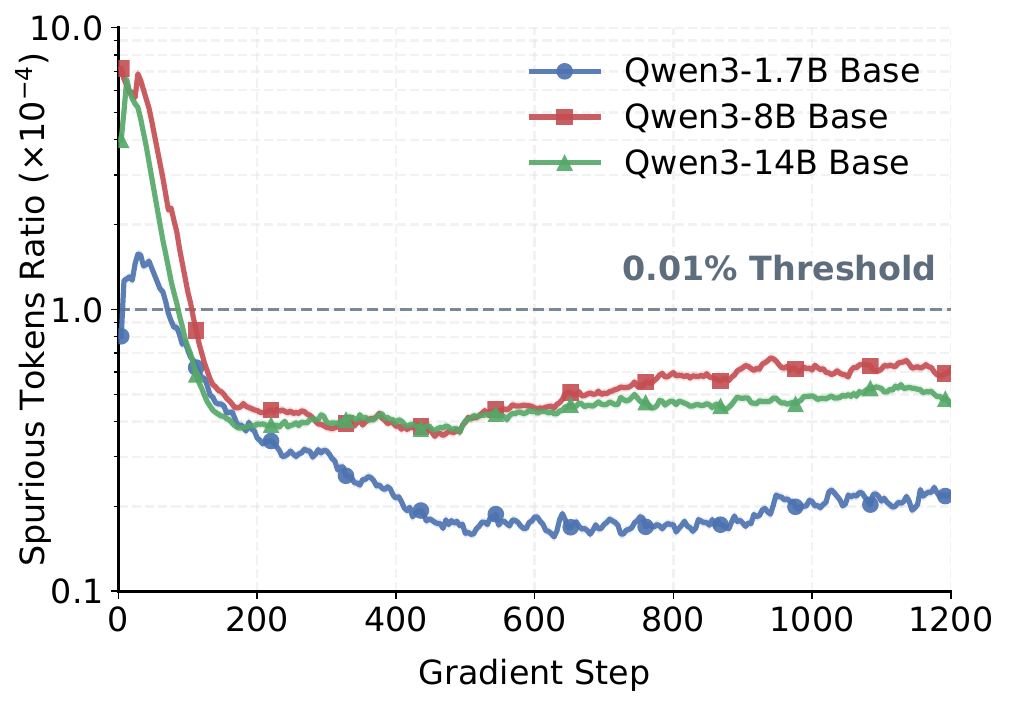}
  \vspace{-3mm}
\caption{\textbf{Spurious Token Ratio Curves.} After warm-up, the ratio of spurious tokens stays below 0.01\% across all models.}
  \label{fig:spurious_ratio_independent}
  \vspace{-4mm} % 微调图片底部的多余留白
\end{wrapfigure}

When evaluated under the JustRL configuration ($\rho_{\mathrm{T}}$=0.7, top-p=0.9), STAPO continues to achieve state-of-the-art performance. The performance gap narrows slightly because high-entropy baselines (JustRL and 20-Entropy) benefit more from decoding heuristics that suppress tail generations. Crucially, STAPO consistently attains optimal results across varied evaluation settings, demonstrating strong robustness and overall performance. This confirms that its inherently stable distribution is less dependent on such decoding heuristics.

Notably, as shown in Figure~\ref{fig:spurious_ratio_independent}, these significant gains are achieved by masking a negligible fraction of tokens (mostly $<0.01\%$) across the 1.7B, 8B, and 14B models. This indicates that RL instability is caused by sparse spurious tokens, which disproportionately affect gradient updates and are precisely identified and effectively suppressed by STAPO.
To better illustrate the impact of spurious tokens, we also provide additional examples, along with a simple case classification, in Appendix~\ref{app:spurious_examples}.
\subsection{Ablation and Sensitivity Analysis}
\label{sec:ablation_sensitivity}

\textbf{Masking Strategy Ablation.} We first ablate different masking strategies applied to positive-advantage, low-probability tokens on the Qwen3-1.7B-Base model. As shown in Figure~\ref{fig:ablation_independent}, low-entropy masking (STAPO) consistently achieves the best performance, whereas high-entropy masking significantly degrades results, suggesting that masking high-entropy tokens can suppress useful exploratory signals. A qualitative comparison of word cloud (see Appendix~\ref{app:word_cloud}) further highlights the difference between exploratory and destructive tokens. 
Random masking yields intermediate performance, gradually approaching that of low-entropy masking as the masking ratio increases. This indicates that while low-entropy masking remains the optimal strategy, random masking can function as a heuristic approximation when the masking budget is sufficiently large.

\textbf{Hyperparameter Sensitivity.} We further analyze the model's sensitivity to two key hyperparameters: the entropy quantile $q$ and the probability threshold $\tau_p$. As shown in Figure~\ref{fig:ablation_independent}, decreasing $q$ leads to a rapid degradation in performance for both high-entropy and random masking strategies. In contrast, STAPO exhibits only a marginal decline, demonstrating superior robustness to variations in the entropy quantile.
Figure~\ref{fig:tau_p} illustrates the effect of varying $\tau_p$, which establishes the threshold for identifying low-probability tokens. An insufficiently low $\tau_p$ may fail to isolate spurious tokens, leading to increased training entropy and subsequent performance degradation. Conversely, an excessively high $\tau_p$ may erroneously penalize legitimate tokens, rendering the policy overly conservative and impairing its reasoning capabilities. Despite these effects, overall performance remains relatively stable, indicating that the model is largely insensitive to $\tau_p$.

\begin{figure*}[t!]
    \centering
    
    % ==========================================
    % 第一层：只放图片，强制底部对齐 [b]（左消融，右比例）
    % ==========================================
    \begin{minipage}[b]{0.48\textwidth}
        \centering
        % 现在左边是柱状图（消融实验）
        \includegraphics[width=\linewidth]{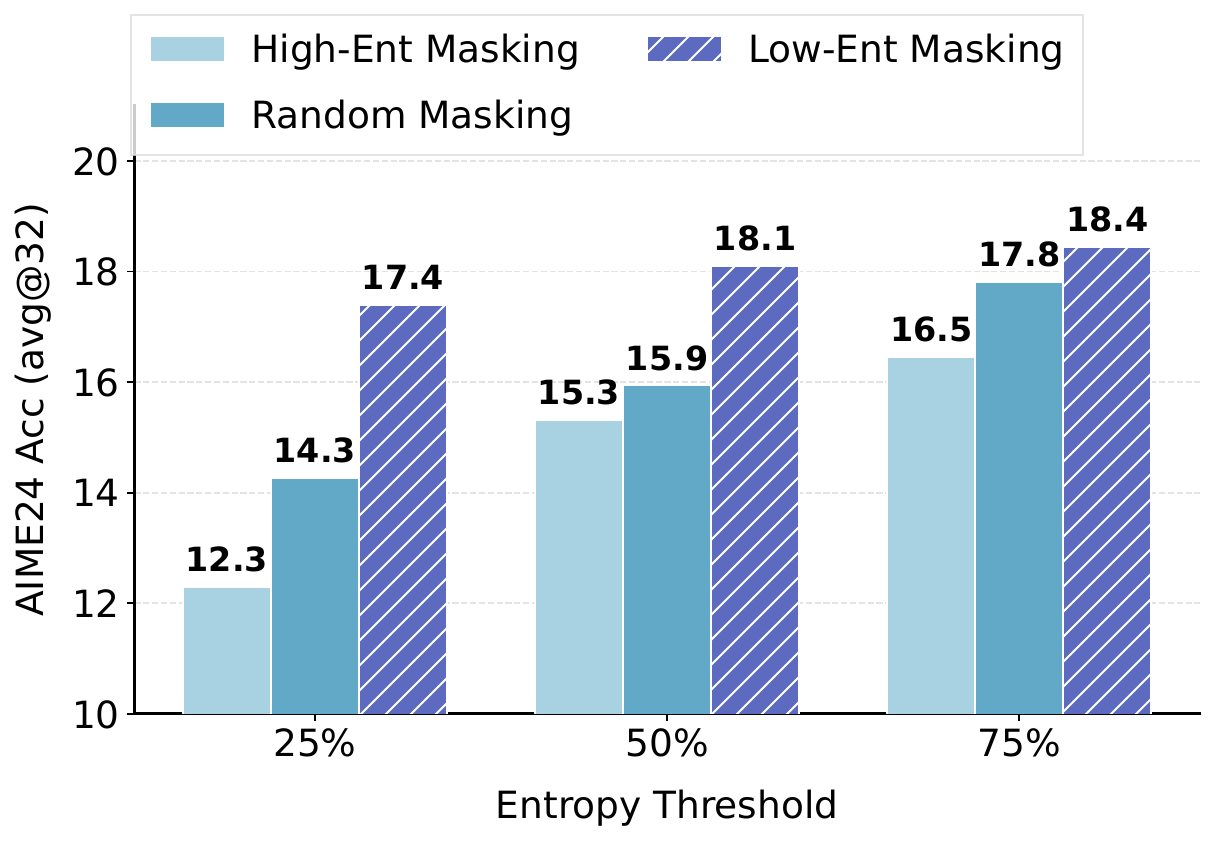}
    \end{minipage}%
    \hfill
    \begin{minipage}[b]{0.48\textwidth}
        \centering
        % 现在右边是折线图（Spurious Ratio）
        \includegraphics[width=\linewidth]{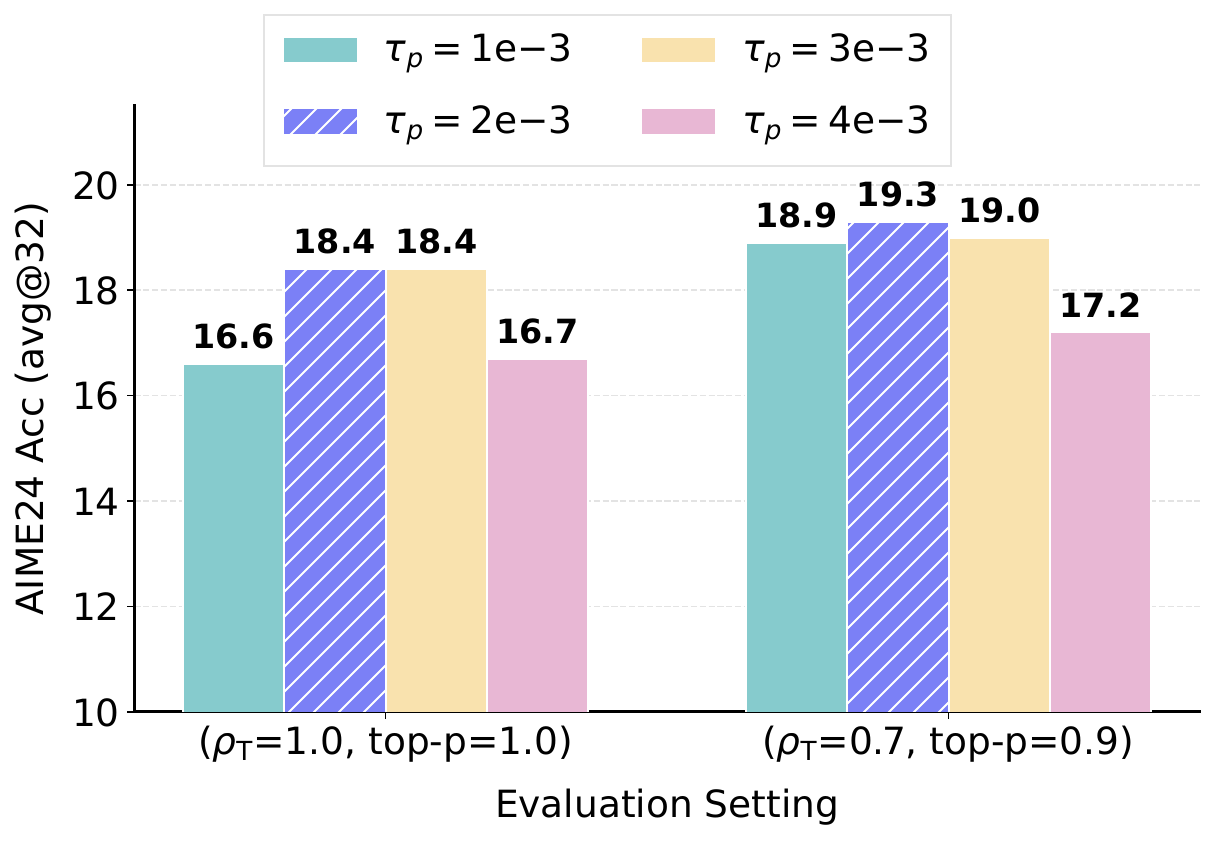}
    \end{minipage}
    
    % ==========================================
    % --- 压缩图片和标题之间的间距 ---
    \vspace{-2mm} 
    % ==========================================
    
    % 第二层：只放标题，强制顶部对齐 [t]（左消融，右比例）
    \begin{minipage}[t]{0.48\textwidth}
        \caption{\textbf{Masking Strategy Ablation.} Various entropy masking strategies are applied to positive-advantage, low-probability tokens on the 1.7B base model with $\rho_{\mathrm{T}}$=1.0,  top-p=1.0.}
        \label{fig:ablation_independent}
    \end{minipage}%
    \hfill
    \begin{minipage}[t]{0.48\textwidth}
        \caption{\textbf{Sensitivity of AIME24 Accuracy to $\tau_p$.} The AIME24 Acc (avg@32) is evaluated across different probability thresholds in various settings on the 1.7B base model.}
        \label{fig:tau_p}
    \end{minipage}

    % ==========================================
    % --- 压缩标题（图底）和正文之间的间距 ---
    \vspace{-4mm}
    % ==========================================
    
\end{figure*}

\section{Conclusion}
\label{Section:conclusion}

In this work, we formally introduce the concept of spurious tokens in RL training, demonstrating that a small fraction of these uninformative tokens can disproportionately skew optimization dynamics. To address this issue, we propose STAPO as a general paradigm designed to mitigate its detrimental effects. Within this paradigm, we implement the S2T mechanism as a specific instantiation to detect and curtail the gradient influence of spurious tokens. Empirical results show that STAPO substantially stabilizes policy entropy and consistently improves reasoning performance.

Our current study primarily focuses on tokens in correctly answered examples that were incorrectly rewarded. In future work, we plan to broaden this scope by analyzing tokens from incorrect responses and extending our empirical evaluation to diverse task domains. We hope that these efforts will provide further insights into improving the stability of RL for LLM reasoning.
% \section*{Acknowledgments}
% This work is supported by 

\clearpage

\bibliographystyle{unsrt}
\bibliography{main}

\clearpage

\beginappendix

\section{Gradient Norm Decomposition}
\label{app:grad_decomp}

We first establish an exact decomposition of the gradient norm with respect to the logits, which serves as the foundation for our bounds.
\begin{lemma}[Gradient Norm Decomposition~\cite{yang2025not}]
\label{lem:grad_norm_decomp}
Let $\bm{a}_{i,t} \in \mathbb R^{|\mathcal{V}|}$ denote the logits and $\pi_\theta(v^n \mid \bm{x}, \bm{y}_{i,<t}) = \frac{e^{a^n}}{\sum_m e^{a^m}}$ be the induced softmax distribution. Let $y_{i,t}$ be the target token with index $k$ (i.e., $v^k = y_{i,t}$).
The squared $\ell_2$-norm of the gradient of the group-style objective $\mathcal{J}$ with respect to $\bm{a}_{i,t}$ satisfies:
\begin{equation}
\label{eq:grad_norm_decomp}
\|\nabla_{\bm{a}_{i,t}} \mathcal{J}(y_{i,t})\|^2
=
|w_{i,t}|^2
\left(
1 - 2\pi_\theta(y_{i,t} \mid \bm{x}, \bm{y}_{i,<t}) + \sum_{n=1}^{|\mathcal{V}|}\pi_\theta(v^n)^2
\right),
\end{equation}
where the weight $w_{i,t}$ is defined as:
\begin{equation}
w_{i,t} = 
\begin{cases} 
0, & \mathrm{if } (\hat{A}_i > 0 \land \rho_{i,t} > 1 + \epsilon) \lor (\hat{A}_i < 0 \land \rho_{i,t} < 1 - \epsilon), \\
\frac{\pi_\theta(y_{i,t} \mid \bm{x}, \bm{y}_{i,<t})}
{\pi_{\theta_{\mathrm{old}}}(y_{i,t} \mid \bm{x}, \bm{y}_{i,<t})} \hat{A}_i, & \mathrm{otherwise}.
\end{cases}
\end{equation}
\end{lemma}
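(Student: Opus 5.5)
We compute the gradient of the single-token contribution directly and then take its squared norm. In the clipped branch the surrogate $\min(\rho\hat A,\mathrm{clip}(\rho)\hat A)$ equals a constant in $\theta$: this happens when $\hat A_i>0$ and $\rho_{i,t}>1+\epsilon$, or when $\hat A_i<0$ and $\rho_{i,t}<1-\epsilon$. There the gradient with respect to $\bm a_{i,t}$ vanishes, which matches $w_{i,t}=0$, and the identity holds trivially.

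In the unclipped branch the contribution is $\rho_{i,t}(\theta)\hat A_i$, with $\pi_{\theta_{\mathrm{old}}}$ fixed. Since $\nabla\rho = \rho\,\nabla\log\pi_\theta(y_{i,t}\mid\cdot)$, this gives
\begin{equation}
\nabla_{\bm a_{i,t}}\mathcal J(y_{i,t}) = w_{i,t}\,\nabla_{\bm a_{i,t}}\log\pi_\theta(v^k) .
\end{equation}
We deliberately drop the positive normalization constant, for example $1/\sum|\bm y_i|$, as the statement does, or absorb it into $w$.

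The key step is the standard softmax log-derivative. Writing $p_n=\pi_\theta(v^n)$, we have
\begin{equation}
\frac{\partial \log p_k}{\partial a^n} = \mathbb{1}[n=k]-p_n ,
\end{equation}
so the gradient is $w_{i,t}(\bm e_k-\bm p)$. Its squared norm expands as
\begin{equation}
\|\bm e_k-\bm p\|^2 = 1-2p_k+\sum_n p_n^2 ,
\end{equation}
which gives exactly the claimed formula after multiplying by $|w_{i,t}|^2$.

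There is no real obstacle here. The only care needed is bookkeeping of the clipping cases and the treatment of the normalization constant. Since the earlier theorem uses $\epsilon_{\mathrm{low}},\epsilon_{\mathrm{high}}$, we would also note that the asymmetric case is handled identically.
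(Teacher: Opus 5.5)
Your proposal is correct and follows the same route as the paper: write the logit gradient as $w_{i,t}(\delta_{kn}-\pi_\theta(v^n))$ via the softmax log-derivative, then expand $\|e_k-p\|^2 = 1-2p_k+\sum_n p_n^2$. Your explicit check of the clipping branches and your use of $\nabla\rho_{i,t}=\rho_{i,t}\nabla\log\pi_\theta$ spell out where $w_{i,t}$ comes from, which the paper simply takes as given.
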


\begin{proof}
Recall that the gradient of the log-likelihood $\ln \pi_\theta(y_{i,t})$ with respect to the logit $a^n$ is given by $\delta_{kn} - \pi_\theta(v^n)$, where $\delta_{kn}$ is the Kronecker delta. The gradient of the weighted objective $\mathcal{J}$ is therefore:
\begin{equation}
\frac{\partial \mathcal{J}(y_{i,t})}{\partial a^n} 
= w_{i,t} \cdot (\delta_{kn} - \pi_\theta(v^n))
= 
\begin{cases} 
w_{i,t} (1 - \pi_\theta(v^k)), & \text{if } n = k, \\
-w_{i,t} \pi_\theta(v^n), & \text{if } n \neq k.
\end{cases}
\end{equation}
Computing the squared $\ell_2$-norm by summing over all $n \in \{1, \dots, |\mathcal{V}|\}$:
\begin{equation}
\begin{aligned}
\|\nabla_{\bm{a}_{i,t}} \mathcal{J}(y_{i,t})\|^2 
&= \left( w_{i,t} (1 - \pi_\theta(v^k)) \right)^2 + \sum_{n \neq k} \left( -w_{i,t} \pi_\theta(v^n) \right)^2 \\
&= |w_{i,t}|^2 \left( 1 - 2\pi_\theta(v^k) + {\pi_\theta(v^k)^2 + \sum_{n \neq k} \pi_\theta(v^n)^2}\right),\\
&= |w_{i,t}|^2 \left( 1 - 2\pi_\theta(v^k) + \sum_{n=1}^{|\mathcal{V}|} \pi_\theta(v^n)^2\right).
\end{aligned}
\end{equation}
Identifying $v^k$ as $y_{i,t}$ completes the proof.
\end{proof}

\section{Proof of Theorem~\ref{prop:grad_bounds}}
\label{app:grad_bounds_proof}
\begin{proof}
We now complete the proof of Theorem~\ref{prop:grad_bounds}.
By Lemma~\ref{lem:grad_norm_decomp}, the squared gradient norm can be written as
\begin{equation}
\label{eq:grad_norm_recall}
\|\nabla_{\bm{a}_{i,t}} \mathcal{J}(y_{i,t})\|^2
=
|w_{i,t}|^2
\left(
1 - 2\pi_\theta(y_{i,t})
+
\sum_{n=1}^{|\mathcal{V}|}\pi_\theta(v^n)^2
\right).
\end{equation}

To simplify the notation throughout this proof, we denote the components of the distribution $\pi_\theta \in \Delta_{|\mathcal{V}|-1}$ as $\pi_n \triangleq \pi_\theta(v^n)$ for $n \in \{1, \dots, |\mathcal{V}|\}$. 
The term $\sum_{n=1}^{|\mathcal{V}|}\pi_n^2$ represents the \emph{collision probability} of the distribution, which measures its concentration. We universally denote the Shannon entropy as $\mathcal{H}(\pi_\theta) = - \sum_{n=1}^{|\mathcal{V}|} \pi_n \ln \pi_n$.

\paragraph{Lower bound.}

To establish a rigorous lower bound, we relate the collision probability to the Shannon entropy. Recall that the Rényi entropy of order 2 is defined as $\mathcal{H}_2(\pi_\theta) = - \ln \left( \sum_{n=1}^{|\mathcal{V}|}\pi_n^2 \right)$.
Expressing the argument of the logarithm as an expectation, we have $\sum_{n=1}^{|\mathcal{V}|}\pi_n^2 = \mathbb{E}_{v \sim \pi_\theta}[\pi_\theta(v)]$. Because the function $f(x) = -\ln(x)$ is strictly convex, applying Jensen's inequality yields:
\[
\mathcal{H}_2(\pi_\theta)
=
- \ln \big( \mathbb{E}_{v \sim \pi_\theta}[\pi_\theta(v)] \big)
\le
\mathbb{E}_{v \sim \pi_\theta}[-\ln \pi_\theta(v)]
=
\mathcal H(\pi_\theta).
\]
Consequently, the collision probability satisfies $\sum_{n=1}^{|\mathcal{V}|}\pi_n^2 = e^{-\mathcal{H}_2(\pi_\theta)} \ge e^{-\mathcal H(\pi_\theta)}$. Substituting this inequality into Eq.~\eqref{eq:grad_norm_recall} yields the entropy-based lower bound on the gradient norm:
\begin{equation}
\|\nabla_{\bm{a}_{i,t}} \mathcal{J}(y_{i,t})\|^2
\ge
|w_{i,t}|^2
\left(
1 - 2\pi_\theta(y_{i,t})
+
e^{-\mathcal H(\pi_\theta)}
\right).
\end{equation}

\paragraph{Upper bound.}

We begin by formulating the objective via equivalent transformations. We introduce the Gini impurity $L = 1 - \sum_{n=1}^{|\mathcal{V}|} \pi_n^2$. The target inequality we aim to prove is:
\begin{equation}
\sum_{n=1}^{|\mathcal{V}|} \pi_n^2 \le 1 - C_V \mathcal{H}(\pi_\theta)^2,
\end{equation}
where $C_V = \frac{|\mathcal{V}|-1}{|\mathcal{V}|(\ln |\mathcal{V}|)^2}$. Rearranging the terms and applying the definition of $L$ yields the equivalent condition $C_V \mathcal{H}(\pi_\theta)^2 \le L$. At the vertices of the probability simplex (where a single component $\pi_n=1$ and all others vanish), $\mathcal{H}(\pi_\theta)=0$ and $L=0$, rendering the inequality trivially satisfied ($0 \le 0$). For any non-vertex distribution, $L > 0$, allowing us to reformulate the problem as bounding the functional $F(\pi_\theta) = \frac{\mathcal{H}(\pi_\theta)^2}{L} \le \frac{1}{C_V}$. Thus, it suffices to prove that the global maximum of $F(\pi_\theta)$ on the simplex $\Delta_{|\mathcal{V}|-1}$ satisfies $F(\pi_\theta) \le \frac{|\mathcal{V}|(\ln |\mathcal{V}|)^2}{|\mathcal{V}|-1}$.

To determine the interior constraints, let $\pi_\theta \in \Delta_{|\mathcal{V}|-1}$ denote a global maximizer of $F(\pi_\theta)$. Excluding the trivial vertices ensures $\mathcal{H}(\pi_\theta) > 0$ and $F(\pi_\theta) > 0$. By the Karush-Kuhn-Tucker (KKT) conditions \cite{boyd2004convex_ch5}, this extremum must reside in the relative interior of a sub-simplex characterized by $k$ non-zero components ($1 < k \le |\mathcal{V}|$). We apply the method of Lagrange multipliers to these non-zero components subject to the probability sum equality constraint. Because the logarithmic transformation is strictly monotonically increasing, it preserves the locations of extrema, allowing us to equivalently maximize $\Lambda = 2\ln \mathcal{H}(\pi_\theta) - \ln L - \lambda(\sum \pi_n - 1)$.
For any $\pi_i > 0$, the first-order stationarity condition is:
\begin{equation}
\frac{\partial}{\partial \pi_i}[2\ln \mathcal{H}(\pi_\theta) - \ln L] = \lambda \implies \frac{2}{\mathcal{H}(\pi_\theta)}(-1-\ln \pi_i) - \frac{1}{L}(-2\pi_i) = \lambda.
\end{equation}
Rearranging this yields $\frac{\pi_i}{L} - \frac{\ln \pi_i}{\mathcal{H}(\pi_\theta)} = \frac{\lambda}{2} + \frac{1}{\mathcal{H}(\pi_\theta)}$. Defining the auxiliary function $g(t) = \frac{t}{L} - \frac{\ln t}{\mathcal{H}(\pi_\theta)}$ for $t \in (0,1)$, its second derivative is $g''(t) = \frac{1}{\mathcal{H}(\pi_\theta) t^2}$. Since $\mathcal{H}(\pi_\theta) > 0$, $g''(t) > 0$ universally on this domain, confirming that $g(t)$ is strictly convex. Consequently, the equation $g(\pi_i) = C$ can possess at most two distinct roots, implying the non-zero components of the extremum can assume at most two distinct values.

We now prove that no two-value stationary points can exist. Assume, toward a contradiction, that the non-zero components take exactly two distinct values, $x$ and $y$ (with $x, y \in (0,1)$ and $x \neq y$), occurring with multiplicities $k'$ and $m'$, respectively. The constraints dictate $k'x + m'y = 1$ and $k'x^2 + m'y^2 = 1 - L$. Because both $x$ and $y$ satisfy the stationarity condition, $g(x) = g(y)$, which implies:
\begin{equation}
\frac{x}{L} - \frac{\ln x}{\mathcal{H}(\pi_\theta)} = \frac{y}{L} - \frac{\ln y}{\mathcal{H}(\pi_\theta)} \implies \mathcal{H}(\pi_\theta) = L \frac{\ln y - \ln x}{y - x}.
\end{equation}
Introducing the logarithmic difference quotient $\Delta = \frac{\ln y - \ln x}{y - x}$, we obtain $\mathcal{H}(\pi_\theta) = L\Delta$, which implies $\ln y = \ln x + \Delta(y-x)$. Substituting this into the entropy expression yields $\mathcal{H}(\pi_\theta) = -k'x\ln x - m'y(\ln x + \Delta(y-x))$. Grouping terms and applying $k'x + m'y = 1$ gives $\mathcal{H}(\pi_\theta) = -\ln x - m'y(y-x)\Delta$. Equating this with $L\Delta$ results in $-\ln x = [L + m'y(y-x)]\Delta$. Substituting $L = 1 - k'x^2 - m'y^2$ and simplifying the bracketed term reduces it to $1 - x(k'x + m'y) = 1 - x$. Consequently, $-\ln x = (1-x)\Delta$, yielding $\Delta = \frac{-\ln x}{1-x}$. By symmetry, we analogously obtain $\Delta = \frac{-\ln y}{1-y}$. Therefore, any two-value stationary point must satisfy:
\begin{equation}
\frac{-\ln x}{1-x} = \frac{-\ln y}{1-y}.
\end{equation}
Let $u(t) = \frac{-\ln t}{1-t}$ for $t \in (0,1)$. Its derivative is $u'(t) = \frac{1 - 1/t - \ln t}{(1-t)^2}$. Let $v(t) = 1 - \frac{1}{t} - \ln t$ denote the numerator. Since $v'(t) = \frac{1-t}{t^2} > 0$ for all $t \in (0,1)$, $v(t)$ is strictly monotonically increasing. Because $v(1) = 0$, it follows that $v(t) < 0$ on $(0,1)$. This establishes that $u'(t) < 0$ globally on this interval, implying $u(t)$ is strictly monotonically decreasing. Thus, $u(x) = u(y)$ holds if and only if $x = y$, directly contradicting $x \neq y$.

The preceding contradiction establishes that the stationary point must be a uniform distribution over a support of size $k$ ($1 < k \le |\mathcal{V}|$), where each of the $k$ non-zero components equals $\frac{1}{k}$. At this point, the functional evaluates to $h(k) = \frac{k(\ln k)^2}{k-1}$. Treating $k$ as a continuous variable on $(1, |\mathcal{V}|]$ and differentiating yields:
\begin{equation}
h'(k) = \frac{\ln k [2(k-1) - \ln k]}{(k-1)^2}.
\end{equation}
For $k > 1$, the standard inequality $\ln k < k - 1$ guarantees that $2(k-1) - \ln k > k - 1 > 0$. Because $\ln k > 0$, $h'(k) > 0$. This demonstrates that the objective function strictly monotonically increases with the support size $k$. Thus, the global maximum is uniquely attained at the full support $k=|\mathcal{V}|$:
\begin{equation}
\max_{\pi_\theta \in \Delta_{|\mathcal{V}|-1}} F(\pi_\theta) = h(|\mathcal{V}|) = \frac{|\mathcal{V}|(\ln |\mathcal{V}|)^2}{|\mathcal{V}|-1}.
\end{equation}

Given this global maximum, the inequality $\frac{\mathcal{H}(\pi_\theta)^2}{1 - \sum_{n=1}^{|\mathcal{V}|} \pi_n^2} \le \frac{|\mathcal{V}|(\ln |\mathcal{V}|)^2}{|\mathcal{V}|-1}$ holds universally for any distribution $\pi_\theta \in \Delta_{|\mathcal{V}|-1}$. Multiplying both sides by the non-negative denominator gives $\mathcal{H}(\pi_\theta)^2 \le \frac{|\mathcal{V}|(\ln |\mathcal{V}|)^2}{|\mathcal{V}|-1} \left(1 - \sum_{n=1}^{|\mathcal{V}|} \pi_n^2\right)$. Multiplying by the constant factor $\frac{|\mathcal{V}|-1}{|\mathcal{V}|(\ln |\mathcal{V}|)^2}$ and substituting the definition of $C_V$ results in $C_V \mathcal{H}(\pi_\theta)^2 \le 1 - \sum_{n=1}^{|\mathcal{V}|} \pi_n^2$. Isolating the summation term yields the desired upper bound:
\begin{equation}
\sum_{n=1}^{|\mathcal{V}|} \pi_n^2 \le 1 - C_V \mathcal{H}(\pi_\theta)^2.
\end{equation}

Combining these lower and upper bounds completes the proof of Theorem~\ref{prop:grad_bounds}.
\end{proof}
\floatstyle{ruled}
\restylefloat{algorithm}
\begin{algorithm}[!t]
\caption{Spurious-Token-Aware Policy Optimization (STAPO)}
\label{alg:stapo}
\begin{algorithmic}[1]
\Require Dataset $\mathcal{D}$, Initial Policy $\pi_\theta$, Group size $G$, Thresholds $\tau_p, \tau_h$, Batch size $B$
\State Initialize policy parameters $\theta$
\For{each training iteration}
    \State Synchronize: $\theta_{\mathrm{old}} \leftarrow \theta$ 
    \State Sample prompts $\bm{x}^B \sim \mathcal{D}$ and generate responses $\{\bm{y}_1, \dots, \bm{y}_G\}^B \sim \pi_{\theta_{\mathrm{old}}}(\cdot \mid \bm{x}^B)$
    \State Compute advantages $\hat{A}^B_i$ using Eq.~\eqref{eq:ratio_adv_def}
    \For{each mini-batch $ \subset \mathrm{Batch Data}$ of size $B$}
        \For{each response $\bm{y}_i$ and token $t$}
            \State Obtain $p_{i,t} = \pi_\theta(y_{i,t} \mid \bm{x}, \bm{y}_{i,<t})$ and $h_{i,t} = \mathcal{H}(\pi_\theta(\cdot \mid \bm{x}, \bm{y}_{i,<t}))$
            \State $\mathbb{I}^{\mathrm{S2T}}_{i,t} \leftarrow 1$ \Comment{Default: Keep}
            \State \textbf{if} $\hat{A}_i > 0 \land p_{i,t} < \tau_p \land h_{i,t} < \tau^{(q)}_h$ \textbf{then} $\mathbb{I}^{\mathrm{S2T}}_{i,t} \leftarrow 0$ \Comment{Spurious Tokens: Mask}
        \EndFor
        \State Update $\theta$ through Eq.~\eqref{eq:stapo_loss} with $\mathbb{I}^{\mathrm{S2T}}_{i,t}$
    \EndFor
\EndFor
\State \textbf{return} Final policy $\pi_\theta$
\end{algorithmic}
\end{algorithm}
\section{Algorithm}\label{app:algorithm}
The complete STAPO procedure is summarized in Algorithm~\ref{alg:stapo}.

\section{Training Details}
\label{app:traing_details}
We implement our proposed STAPO algorithm based on the open-source alignment framework \texttt{veRL}~\cite{sheng2025hybridflow}. We utilize the DAPO-Math-17K~\cite{yu2025dapo} as the training dataset, where each prompt is formatted with the instruction: ``Please reason step by step, and put your final answer within \textbackslash boxed\{\}''. All models are trained using the AdamW optimizer with a constant learning rate of $1 \times 10^{-6}$ and a warm-up phase of 10 steps. To ensure training stability, we apply a global gradient clipping norm of 1.0.

For efficient data generation, we utilize vLLM~\cite{kwon2023efficient} as the inference backend. The training process employs a global batch size of 256, with each prompt generating a group of $G=8$ rollouts. Following the DAPO formulation, we do not employ a separate value network or an additional KL divergence penalty term in the loss function; instead, we rely on group-relative advantages and the clipping mechanism to constrain policy updates. We conduct all experiments on 64 NVIDIA H20 GPUs, with each training session taking an average of 5 to 7 days. The complete set of hyperparameters used across all model scales is detailed in Table~\ref{tab:full_hyperparameters}.
\begin{table}[!t]
\centering
\caption{Full Training Hyperparameters} 
\label{tab:full_hyperparameters}
\begin{tabular}{lc}
\toprule
\textbf{Hyperparameter} & \textbf{Value} \\
\midrule
Advantage Estimator & GRPO \\
Reward Function & DAPO \\
Probability Threshold $\tau_p$ & 0.002 \\
Entropy Percentile $q$ & 75 \\
Use KL Loss & No \\
Use Entropy Regularization & No \\
Train Batch Size & 256 \\
Max Prompt Length & 1k \\
Max Response Length & 15k \\
PPO Mini Batch Size & 64 \\
PPO Micro Batch Size/GPU & 1 \\
Clip Ratio Range & [0.8, 1.28] \\
Grad Clip & 1.0 \\
Learning Rate & 1e-6 \\
Warm-up Step & 10 \\
Training Temperature & 1.0 \\
Training Top-p & 1.0 \\
Validation Temperature & 1.0 or 0.7 \\
Validation Top-p & 1.0 or 0.9 \\
Rollout N & 8 \\
\bottomrule
\end{tabular}
\end{table}

\section{Supplementary Experimental Results}
\label{app:additional_experiments}

% ==========================================
% 小节 1：超参数敏感性消融
% ==========================================

% ==========================================
% 小节 2：大模型的训练动态
% ==========================================
\subsection{Supplementary Training Dynamics}
\label{app:additional_training_dynamics}

\begin{figure}[t!]
    \centering
    
    % 第二张大图：8B 规模
    \begin{subfigure}{\textwidth}
        \centering
        \includegraphics[width=1.0\linewidth]{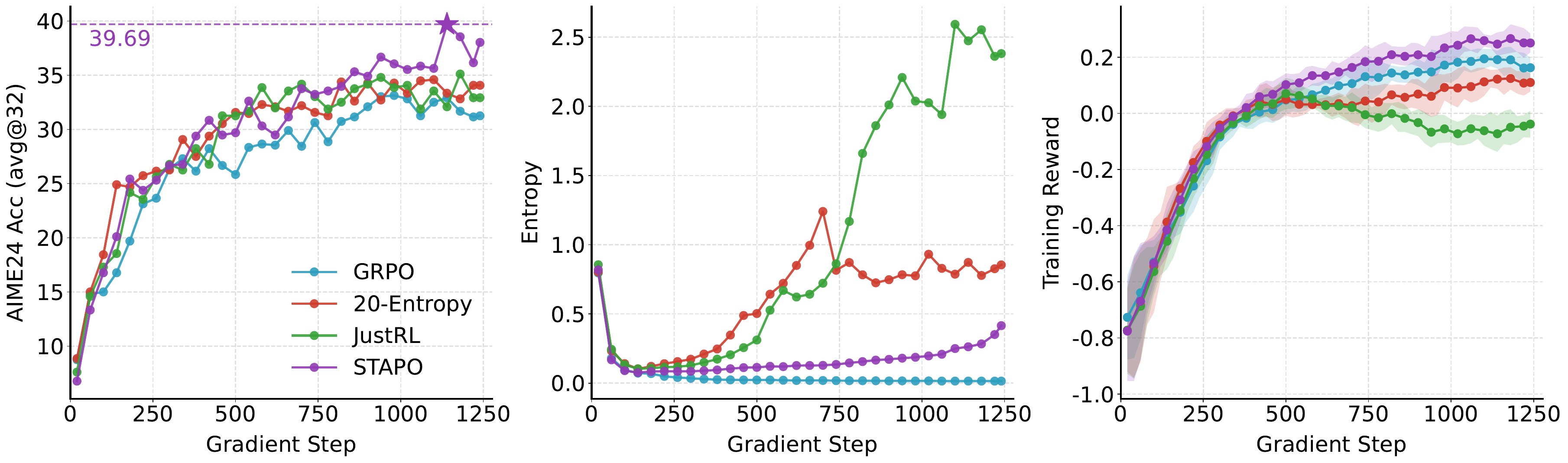} 
        \caption{AIME24 Acc, entropy, and training reward for Qwen3-8B base.}
        \label{fig:app_8b}
    \end{subfigure}

    \vspace{10pt} % 附录空间相对宽裕，可以适当拉开一点间距

    % 第三张大图：14B 规模
    \begin{subfigure}{\textwidth}
        \centering
1        \includegraphics[width=1.0\linewidth]{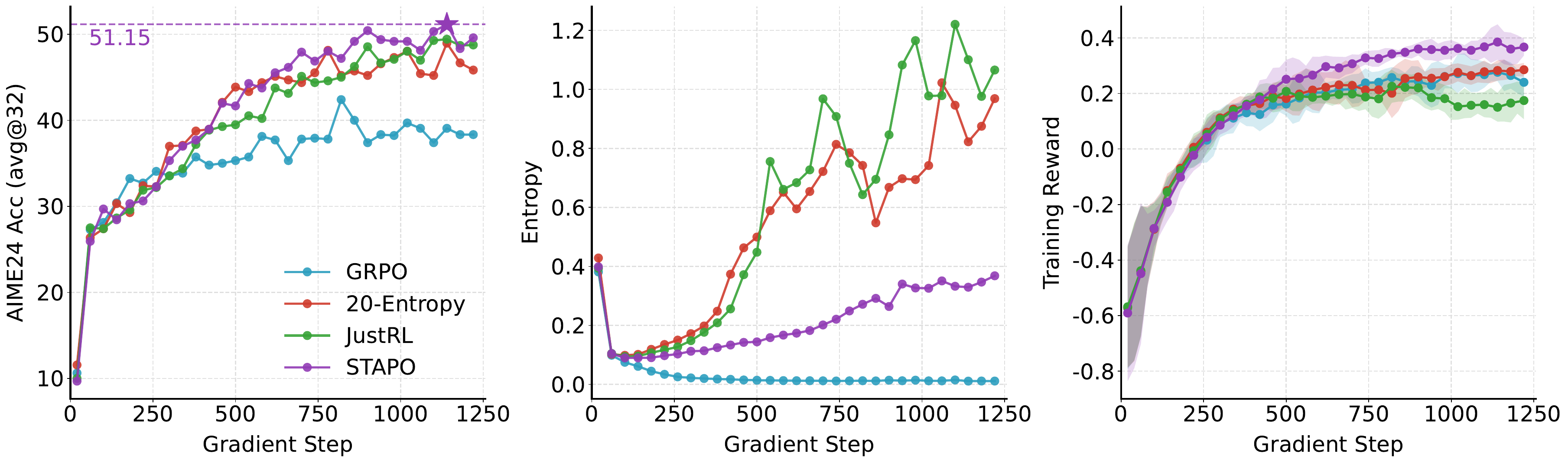} 
        \caption{AIME24 Acc, entropy, and training reward for Qwen3-14B base.}
        \label{fig:app_14b}
    \end{subfigure}
    
    \caption{\textbf{Training Results on Larger Models.} The training dynamics for the Qwen3-8B and Qwen3-14B models. The results demonstrate that STAPO consistently maintains stable policy entropy and achieves higher accuracy across different parameter scales.}
    \label{fig:app_results}
\end{figure}

We further present the training dynamics for the larger model scales (Qwen3-8B and Qwen3-14B) in this section. As shown in Figure~\ref{fig:app_results}, the trends observed on the 1.7B model scale consistently to larger models. Across both 8B and 14B base models, STAPO successfully prevents premature entropy collapse, maintaining stable exploration throughout the training process and achieving state-of-the-art performance. To clearly visualize the extended training process, we downsampled the plotted curves, which were originally evaluated every 20 gradient steps, by retaining only every third data point. Crucially, the true peak performance of each accuracy curve remains strictly preserved.

% \subsection{Hyperparameter Sensitivity}
% \label{app:hyperparameter_sensitivity}
% \begin{figure}[t!]
%     \centering
%     \includegraphics[width=0.90\linewidth]{figures/tau_p_ablation.pdf}
%     \caption{\textbf{Sensitivity to Probability Threshold ($\tau_p$).} Training dynamics of AIME24 accuracy and policy entropy under different probability thresholds for the Qwen3-1.7B base model.}
%     \label{fig:tau_p}
% \end{figure}

% We evaluate the sensitivity of the Qwen3-1.7B base model to the probability threshold $\tau_p$. As illustrated in Figure~\ref{fig:tau_p}, model performance is highly dependent on the choice of $\tau_p$. Specifically on AIME24, as $\tau_p$ shifts from $2 \times 10^{-3}$ to $2 \times 10^{-2}$, accuracy drops sharply, accompanied by a collapse in policy entropy. Conversely, when $\tau_p$ is reduced to a more conservative $2 \times 10^{-4}$, accuracy initially improves but subsequently degrades, exhibiting highly volatile entropy dynamics. This trend indicates that while overly aggressive thresholds remove low-probability tokens crucial for maintaining valid reasoning chains, overly conservative filtering strategies fail to effectively eliminate spurious tokens. Therefore, effective filtering must remain highly selective, targeting only genuinely spurious tokens while preserving rare yet semantically meaningful ones.

\subsection{Word Cloud Visualization}
\label{app:word_cloud}
During the training of Qwen-1.7B, we visualized word clouds for low-probability correct answers, categorizing tokens into low-entropy "spurious tokens" and high-entropy "exploratory tokens" as illustrated in Figures~\ref{fig:spurious_tokens_cloud} and~\ref{fig:non_spurious_tokens_cloud}. The most frequent spurious tokens primarily include specific digits (e.g., ``1'', ``2'', ``3'') and mathematical symbols (e.g., ``\$'', ``x''). While these tokens may appear in correct responses, they are often associated with formatting or calculation errors. When combined with large gradient updates, they can induce disproportionately unstable optimization dynamics.
In contrast, the retained exploratory tokens correspond to the structural vocabulary of mathematical reasoning, including words like ``Let'', ``This'', ``So'', and ``Wait''. These tokens represent key procedural and logical components that sustain coherent reasoning chains. Retaining these tokens ensures that the model's core logical reasoning capabilities remain intact.

\begin{figure*}[t!]
    \centering
    % --- 子图 (a)：折线图 ---
    % --- 子图 (b)：Spurious Tokens 词云 ---
\begin{subfigure}{0.45\textwidth}
        \centering
        \includegraphics[width=\textwidth]{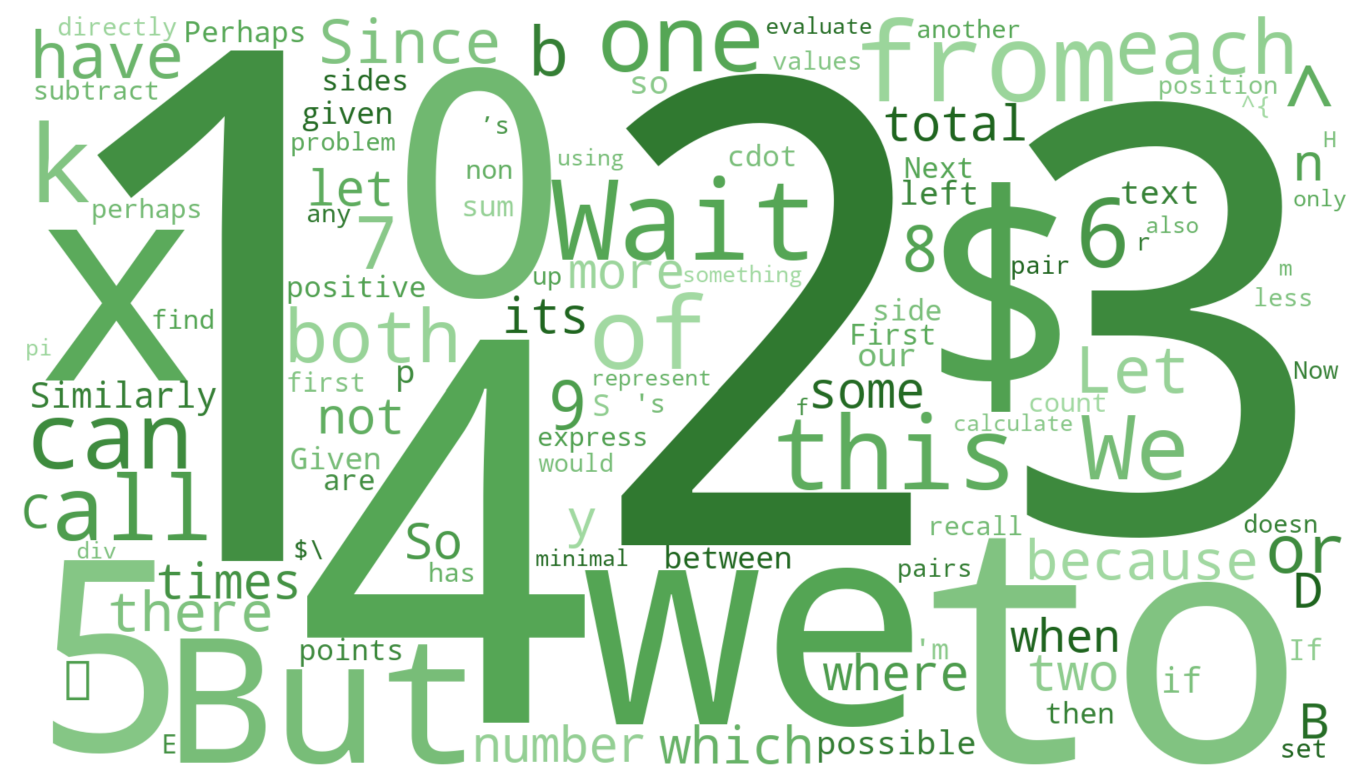}
        \caption{Spurious tokens.}
        \label{fig:spurious_tokens_cloud}
    \end{subfigure}
    \hspace{1cm}
    % --- 子图 (b)：Non-spurious Tokens 词云 (Low Prob, High Entropy) ---
    \begin{subfigure}{0.45\textwidth}
        \centering
        \includegraphics[width=\textwidth]{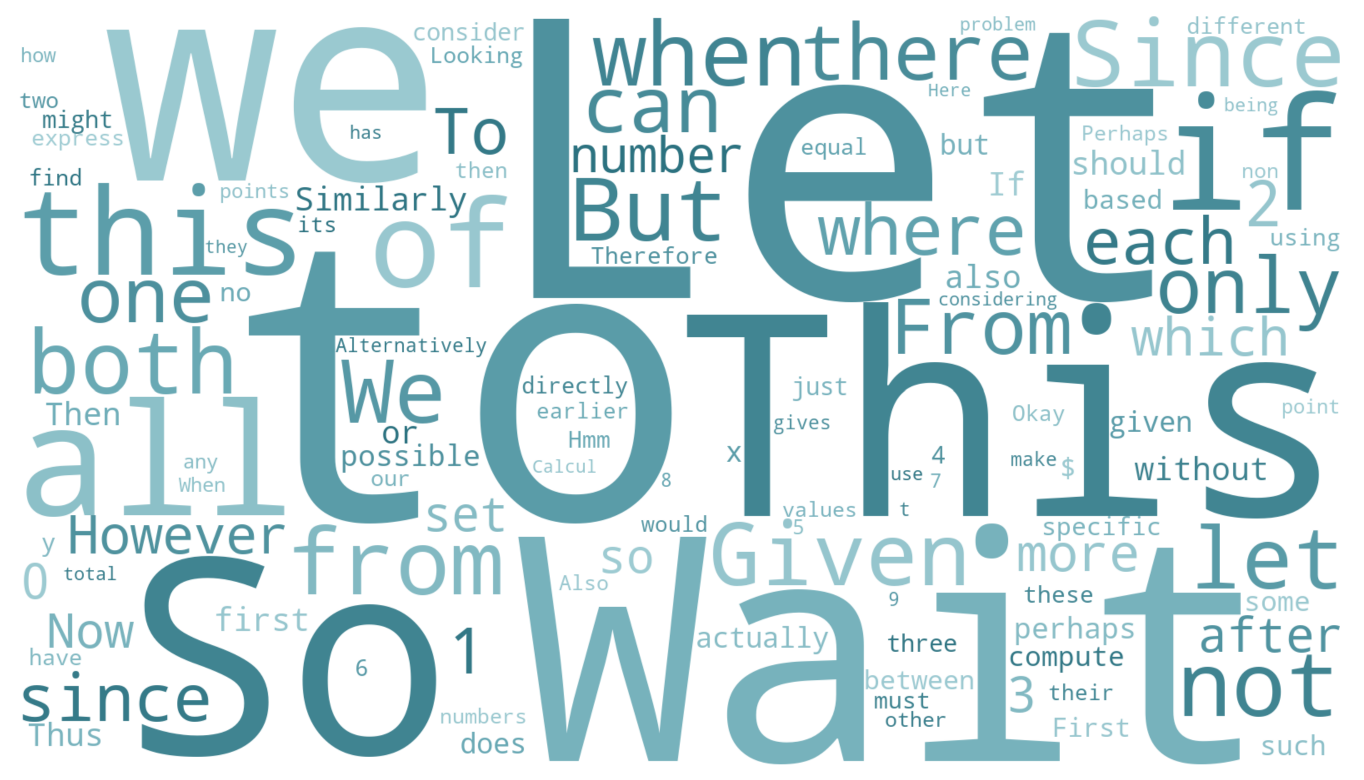}
        \caption{Exploratory tokens.}
        \label{fig:non_spurious_tokens_cloud}
    \end{subfigure}

\caption{\textbf{Word clouds.} The font size of each token is proportional to its occurrence frequency. To facilitate a clearer analysis, common articles and prepositions are filtered out.}
    \vspace{-4mm}
\label{fig:spurious_analysis_tokens}
\end{figure*}

\subsection{Supplementary Spurious Cases}\label{app:spurious_examples}

In this section, we provide a detailed taxonomic description of the three categories of spurious tokens identified in our mechanistic analysis. Below we detail each category with a representative example referenced from the accompanying tables.

\begin{itemize}
    \item \textbf{Category I: Uncommon Syntax (Table \ref{tab:cat_1}).} 
    This category comprises tokens that are linguistically valid but highly improbable given the model's pre-trained distribution. The model often substitutes standard technical terminology with colloquialisms or awkward phrasing. 
    \\
    \textit{Example:} As shown in \textbf{Case 2}, when describing operations on a graph, the model selects the token ``\texttt{broken}'' (Prob: 0.05\%) to describe the removal of edges. The canonical term ``\texttt{removed}'' dominates the Top-5 distribution (Prob: 85.53\%). Reinforcing such low-probability synonyms forces the policy to drift away from standard mathematical language, destabilizing the linguistic entropy.

    \item \textbf{Category II: Hallucinations and Math Errors (Table \ref{tab:cat_2}).} 
    This category involves factual inaccuracies, calculation errors, or hallucinated values embedded within a trajectory that coincidentally results in a correct final answer (trajectories with locally incorrect steps but correct final answers).
    \\
    \textit{Example:} In \textbf{Case 3}, the model attempts to verify a factorization with the equation $6901 = 67 \times 103 \mathbin{\text{\badtoken{-}}} 1$. Since $67 \times 103$ equals $6901$ exactly, the inclusion of the subtraction operator ``\texttt{-}'' renders the statement mathematically false ($6900 \neq 6901$). Because the final answer matches the ground truth, the RL objective erroneously increases the likelihood of this arithmetic hallucination.

    \item \textbf{Category III: Formatting Errors (Table \ref{tab:cat_3}).} 
    The final category pertains to \LaTeX{} syntax errors, malformed delimiters, or hallucinated stop sequences that occur in the trajectory but are ignored by the answer extraction parser.
    \\
    \textit{Example:} \textbf{Case 4} illustrates a latent formatting error. In the summation sequence, the model generates the integer token ``\texttt{3}'' immediately following the plus sign, bypassing the canonical token ``\texttt{\textvisiblespace}'' (space) which carries a significantly higher prior probability. Although \LaTeX{} renderers automatically correct the spacing (making the error invisible in the final output), the policy learns to overfit to a non-standard sequence that omits canonical whitespace.
\end{itemize}

Throughout this section, model outputs are presented as rendered \LaTeX{} to facilitate content verification. However, for errors categorized under formatting or syntax, we retain the raw token sequence to explicitly demonstrate the distributional anomalies.

%%%%%%%%%%%%%%%%%%%%%%%%%%%%%%%%%%%%
% Category I: Uncommon Syntax
%%%%%%%%%%%%%%%%%%%%%%%%%%%%%%%%%%%%
\begin{longtable}{p{0.96\linewidth}}
\caption{\textbf{Category I: Uncommon Syntax.}} \label{tab:cat_1} \\
\toprule
\endfirsthead

\multicolumn{1}{c}{{\bfseries \tablename\ \thetable{} -- continued from previous page}} \\
\toprule
\endhead

\midrule
\multicolumn{1}{r}{{Continued on next page...}} \\
\endfoot
    
\bottomrule
\endlastfoot

% --- Case 1 ---
\rowcolor{lightpurple} 
\textbf{Case 1} \\
\textbf{Context:} The length of $A$ is $1981$, and the length of $B$ is $1014$. Since the length of $U$ is $2012$, for the minimum length of $A \cap B$, $B$ should start immediately from the right of $A$. The length of $A \cap B$ is the sum of the lengths of $A$ and $B$ minus the length of $U$ , so its minimum value is $1981 + 1014 - 2012 = 983$. Here is a Python \badtoken{calculation} to verify the above reasoning... \\
\arrayrulecolor{linegray}\midrule[0.5pt]\arrayrulecolor{black}
 \begin{tabular}{@{}l l l p{8cm}@{}}
 \textbf{Spurious Token} & \textbf{Prob ($P$)} & \textbf{Adv} & \textbf{Top-5 Distribution} \\
 \badtoken{calculation} & 0.0115\% & 1.62 & code(51.90\%) | program(31.48\%) | solution(11.58\%) | script(3.32\%) | function(0.58\%) \\
\end{tabular} \\        
\midrule

% --- Case 2 ---
\rowcolor{lightpurple} 
\textbf{Case 2} \\
\textbf{Context:} We are given $n$ islands with ferry lines between each pair of distinct islands, which is essentially a complete graph $K_n$. Each edge of the graph is colored by one of $k$ colors, corresponding to the companies operating the ferry lines.

For any closure of one company, all its lines must be \badtoken{broken} (the corresponding edges ignored) such that the remaining graph can no longer support a Hamiltonian Cycle, meaning it's no longer possible to visit all the islands exactly once without returning to the starting point.

According to the condition, every color, therefore, must serve to connect the graph in such a way that the removal of edges of any single color makes the graph Hamiltonian non-existent.

In a complete graph $K_n$, we can color the edges in such a way that removing any single color results in a graph that lacks a Hamiltonian Cycle. For this, we can use a result from graph theory, informally known as "edge coloring minimum spanning trees": for $n$ nodes, we can divide the set of nodes into $2\lfloor{\frac{n}{2}}\rfloor$ color classes (as far as coloring spans, and where $\lfloor{x}\rfloor$ means "the greatest integer less than $x$").

So when we have $n=2m$ (even) nodes, the maximal number of colors $k$ we can use is $m-1$, since we can divide our $n$ nodes into $m$ colors and deleting any one will still result in a complete graph $K_{2m-1}$ which remains Hamiltonian (since $K_{2m-1}$ is always Hamiltonian for $m\geq1$).

For $n=2m+1$ (odd) nodes, we can divide our $n$ nodes into $m$ classes, so the \badtoken{discard} of any one of these $m$ classes will disintegrate the graph such that a Hamiltonian Cycle will become impossible because it contains a vertex of degree 1 or a subdivision thereof. \\
\arrayrulecolor{linegray}\midrule[0.5pt]\arrayrulecolor{black}
 \begin{tabular}{@{}l l l p{8cm}@{}}
 \textbf{Spurious Token} & \textbf{Prob ($P$)} & \textbf{Adv} & \textbf{Top-5 Distribution} \\
 \badtoken{broken} & 0.0471\% & 2.47 & removed(85.53\%) | closed(3.32\%) | deleted(2.58\%) |  disconnected(1.57\%) |  taken(0.95\%) \\
 \badtoken{discard} & 0.0001\% & 2.47 & maximal(85.89\%) |  maximum(9.05\%) |  number(1.57\%) |  largest(0.45\%) |  answer(0.45\%) \\
\end{tabular} \\

\midrule

% --- Case 3 ---
\rowcolor{lightpurple} 
\textbf{Case 3} \\
\textbf{Context:} Therefore, the final \badtoken{boxed} answer is:\esc{\[ \boxed{41} \]<|endoftext|>} \\
\arrayrulecolor{linegray}\midrule[0.5pt]\arrayrulecolor{black}
 \begin{tabular}{@{}l l l p{8cm}@{}}
 \textbf{Spurious Token} & \textbf{Prob ($P$)} & \textbf{Adv} & \textbf{Top-5 Distribution} \\
 \badtoken{boxed} & 0.1486\% & 0.54 & answer(98.88\%) |  result(0.52\%) |  value(0.19\%) |  boxed(0.15\%) |  solution(0.15\%) \\
\end{tabular} \\        
\midrule

% --- Case 4 ---
\rowcolor{lightpurple} 
\textbf{Case 4} \\
\textbf{Context:} To solve the given problem, we need to find the smallest number of candies $N$ that satisfies the given conditions. Specifically:

1. When $N$ candies are divided \badtoken{between} 21 people (Albert and his 20 friends), the remainder is 5. \\
\arrayrulecolor{linegray}\midrule[0.5pt]\arrayrulecolor{black}
 \begin{tabular}{@{}l l l p{8cm}@{}}
 \textbf{Spurious Token} & \textbf{Prob ($P$)} & \textbf{Adv} & \textbf{Top-5 Distribution} \\
 \badtoken{between} & 0.0667\% & 0.72 & among(64.55\%) |  by(23.75\%) |  evenly(8.74\%) |  amongst(1.95\%) |  equally(0.92\%) \\
\end{tabular} \\        
\midrule

% --- Case 5 ---
\rowcolor{lightpurple} 
\textbf{Case 5} \\
\textbf{Context:} To \badtoken{tr}isegment, we must first count the number of points in the polygon.

\esc{###} Step 1:sume the number of points in the polygon

Let $n$ \badtoken{y}ea the number of sides (or vertices) in a polygon $P$ as given in the problem statement. We are given that $P_{1}, P_{2}, \ldots P_{n} /$ are the $n$ vertices of the polygon. \\
\arrayrulecolor{linegray}\midrule[0.5pt]\arrayrulecolor{black}
 \begin{tabular}{@{}l l l p{8cm}@{}}
 \textbf{Spurious Token} & \textbf{Prob ($P$)} & \textbf{Adv} & \textbf{Top-5 Distribution} \\
 \badtoken{tr} & 0.0004\% & 0.72 & olve(63.83\%) |  determine(30.15\%) |  find(5.24\%) |  triang(0.20\%) |  count(0.10\%) \\
 \badtoken{y} & 0.0003\% & 0.72 & |  be(88.02\%) |  represent(5.63\%) |  denote(3.41\%) |  =(0.76\%) |  and(0.59\%) \\
\end{tabular} \\
\end{longtable}

%%%%%%%%%%%%%%%%%%%%%%%%%%%%%%%%%%%%
% Category II: Hallucinations and Math Errors
%%%%%%%%%%%%%%%%%%%%%%%%%%%%%%%%%%%%
\begin{longtable}{p{0.96\linewidth}}

\caption{\textbf{Category II: Hallucinations and Math Errors.}} \label{tab:cat_2} \\
\toprule
\endfirsthead

\multicolumn{1}{c}{{\bfseries \tablename\ \thetable{} -- continued from previous page}} \\
\toprule
\endhead

\midrule
\multicolumn{1}{r}{{Continued on next page...}} \\
\endfoot
    
\bottomrule
\endlastfoot

% --- Case 1 ---
\rowcolor{lightpurple} 
\textbf{Case 1} \\
\textbf{Context:} $$A = \frac{1}{2}\left|\text{x}_1(\text{y}_2-\text{y}_3) + \text{x}_2(\text{y}_3-\text{y}_{\badtoken{2}}) + \text{x}_3(\text{y}_1-\text{y}_2) \right|$$
Here, \$ x\badtoken{\$s} and \$y\$s are the coordinates of the points. \\
\arrayrulecolor{linegray}\midrule[0.5pt]\arrayrulecolor{black}
 \begin{tabular}{@{}l l l p{8cm}@{}}
 \textbf{Spurious Token} & \textbf{Prob ($P$)} & \textbf{Adv} & \textbf{Top-5 Distribution} \\
 \badtoken{2} & 0.0142\% & 0.54 & 1(99.95\%) | 2(0.01\%) | 3(0.01\%) | 0(0.00\%) | \}\_(0.00\%) \\
  \badtoken{\$s} & 0.0117\% & 0.54 & \_(94.78\%) | 1(1.05\%) |  =(0.82\%) | \_\{(0.82\%) |  \$(0.50\%) \\
\end{tabular} \\        
\midrule

% --- Case 2 ---
\rowcolor{lightpurple} 
\textbf{Case 2} \\
\textbf{Context:} \( (a - 10, b - 10) = (193, 11) \) → \( a = 203 \), \( b = 21\badtoken{3} \) \\
\arrayrulecolor{linegray}\midrule[0.5pt]\arrayrulecolor{black}
 \begin{tabular}{@{}l l l p{8cm}@{}}
 \textbf{Spurious Token} & \textbf{Prob ($P$)} & \textbf{Adv} & \textbf{Top-5 Distribution} \\
 \badtoken{3} & 0.0017\% & 0.94 & \texttt{\textbackslash} (99.98\%) | \texttt{\textbackslash n}(0.01\%) | 1(0.01\%) |  \texttt{\textbackslash\textbackslash n}(0.00\%) | 3(0.00\%) \\
\end{tabular} \\

\midrule

% --- Case 3 ---
\rowcolor{lightpurple} 
\textbf{Case 3} \\
\textbf{Context:} $6901 = 67 \times 103 \badtoken{-} 1$ \\

\arrayrulecolor{linegray}\midrule[0.5pt]\arrayrulecolor{black}
 \begin{tabular}{@{}l l l p{8cm}@{}}
 \textbf{Spurious Token} & \textbf{Prob ($P$)} & \textbf{Adv} & \textbf{Top-5 Distribution} \\
 \badtoken{-} & 0.0202\% & 1.21 & \$,(99.48\%) | \$(0.36\%) |  +(0.13\%) |  -(0.02\%) |  =(0.00\%) \\
\end{tabular} \\        
\midrule

% --- Case 4 ---
\rowcolor{lightpurple} 
\textbf{Case 4} \\
\textbf{Context:} 
$-16a^3 = 350,$

$a^3 = -21.\badtoken{5}625.$ \\
\arrayrulecolor{linegray}\midrule[0.5pt]\arrayrulecolor{black}
 \begin{tabular}{@{}l l l p{8cm}@{}}
 \textbf{Spurious Token} & \textbf{Prob ($P$)} & \textbf{Adv} & \textbf{Top-5 Distribution} \\
 \badtoken{5} & 0.2433\% & 1.62 & 8(98.16\%) | 2(1.40\%) | 5(0.24\%) | 9(0.07\%) | 7(0.04\%) \\
\end{tabular} \\        
\midrule

% --- Case 5 ---
\rowcolor{lightpurple} 
\textbf{Case 5} \\
\textbf{Context:} Compute \( a \) for Each Pair:\\
$\displaystyle
\begin{aligned}
   (-5, 30) & \rightarrow a = -25 \\
   (-4, 12) & \rightarrow a = -8 \\
   (-3, 6) & \rightarrow a = -3 \\
   (-2, 3) & \rightarrow a = -1 \\
   (0, 0) & \rightarrow a = 0 \\
   (-7, -42) & \rightarrow a = 49 \\
   (-8, -24) & \rightarrow a = 32 \\
   (-9, -18) & \rightarrow a = \badtoken{9} \\ 
   (-10, -15) & \rightarrow a = 5 \\
   (-12, -12) & \rightarrow a = 24
\end{aligned}
$\\
\arrayrulecolor{linegray}\midrule[0.5pt]\arrayrulecolor{black}
 \begin{tabular}{@{}l l l p{8cm}@{}}
 \textbf{Spurious Token} & \textbf{Prob ($P$)} & \textbf{Adv} & \textbf{Top-5 Distribution} \\
 \badtoken{9} & 0.0123\% & 0.35 & 2(99.98\%) | 9(0.01\%) | 1(0.01\%) | 3(0.00\%) | 8(0.00\%) \\
\end{tabular} \\
\midrule

% --- Case 6 ---
\rowcolor{lightpurple} 
\textbf{Case 6} \\
\textbf{Context:} 
Thus,
$$
\sin x = \frac{435}{\badtoken{4}35} = \frac{435}{533}
$$ \\
\arrayrulecolor{linegray}\midrule[0.5pt]\arrayrulecolor{black}
 \begin{tabular}{@{}l l l p{8cm}@{}}
 \textbf{Spurious Token} & \textbf{Prob ($P$)} & \textbf{Adv} & \textbf{Top-5 Distribution} \\
 \badtoken{4} & 0.0867\% & 1.62 & 5(95.28\%) | 3(3.69\%) | \texttt{\textbackslash}(0.64\%) | 6(0.11\%) | 4(0.09\%) \\
\end{tabular} \\        

\end{longtable}

%%%%%%%%%%%%%%%%%%%%%%%%%%%%%%%%%%%%
% Category III: Formatting Errors
%%%%%%%%%%%%%%%%%%%%%%%%%%%%%%%%%%%%
\begin{longtable}{p{0.96\linewidth}}

\caption{\textbf{Category III: Formatting Errors.}} \label{tab:cat_3} \\
\toprule
\endfirsthead

\multicolumn{1}{c}{{\bfseries \tablename\ \thetable{} -- continued from previous page}} \\
\toprule
\endhead

\midrule
\multicolumn{1}{r}{{Continued on next page...}} \\
\endfoot
    
\bottomrule
\endlastfoot

% --- Case 1 ---
\rowcolor{lightpurple} 
\textbf{Case 1} \\
\textbf{Context:} To solve this problem, let's start by using the given information about the function \( f(x) = ax^2 + bx + c \):

1. Since \( f(1) = 0 \), we know that:
   $$
   a + b + c = 0
   $$
   This means \( c = -a - b \).

\badtoken{Now}, substitute \( c = -a - b \) into the quadratic function:
   $$
   f(x) = ax^2 + bx - (a + b)
   $$ \\
\arrayrulecolor{linegray}\midrule[0.5pt]\arrayrulecolor{black}
 \begin{tabular}{@{}l l l p{8cm}@{}}
 \textbf{Spurious Token} & \textbf{Prob ($P$)} & \textbf{Adv} & \textbf{Top-5 Distribution} \\
 \badtoken{Now} & 0.0552\% & 0.35 & 2(99.74\%) | Next(0.12\%) | Now(0.06\%) | So(0.03\%) | Sub(0.02\%) \\
\end{tabular} \\        
\midrule

% --- Case 2 ---
\rowcolor{lightpurple} 
\textbf{Case 2} \\
\textbf{Context:} 
\texttt{\textbackslash}boxed\{5\badtoken{\textbackslash}text\{ agony\}\}

\\
\arrayrulecolor{linegray}\midrule[0.5pt]\arrayrulecolor{black}
 \begin{tabular}{@{}l l l p{8cm}@{}}
 \textbf{Spurious Token} & \textbf{Prob ($P$)} & \textbf{Adv} & \textbf{Top-5 Distribution} \\
 \badtoken{\textbackslash} & 0.0015\% & 0.35 & \texttt{\}\textbackslash n}(99.57\%) | \}(0.25\%) | \}\texttt{\textbackslash n\textbackslash n}(0.12\%) | \}\texttt{\textbackslash}(0.03\%) | \}.\texttt{\textbackslash n}(0.01\%)
\end{tabular} \\

\midrule

% --- Case 3 ---
\rowcolor{lightpurple} 
\textbf{Case 3} \\
\textbf{Context:} \texttt{\textbackslash therefore 10\textasciicircum y}\badtoken{\texttt{)\$}}\texttt{ \textbackslash(\textbackslash sqrt\{25\}=5} \\

\arrayrulecolor{linegray}\midrule[0.5pt]\arrayrulecolor{black}
 \begin{tabular}{@{}l l l p{8cm}@{}}
 \textbf{Spurious Token} & \textbf{Prob ($P$)} & \textbf{Adv} & \textbf{Top-5 Distribution} \\
 \badtoken{\texttt{)\$}} & 0.0063\% & 0.72 & =(84.69\%) | =\texttt{\textbackslash}(6.95\%) | =(4.22\%) | \texttt{\textvisiblespace\textbackslash}n(1.55\%) | \texttt{\textbackslash}n(0.35\%) \\
\end{tabular} \\        
\midrule

% --- Case 4 ---
\rowcolor{lightpurple} 
\textbf{Case 4} \\
\textbf{Context:} 
$$
\Sigma_3 = 11^3 + 10^3 + 9^3 + 8^3 + 7^3 + 6^3 + 5^3 + 4^3 +\!\badtoken{3}^3 +2^3 + 1^3 + 0^3
$$ \\
\arrayrulecolor{linegray}\midrule[0.5pt]\arrayrulecolor{black}
 \begin{tabular}{@{}l l l p{8cm}@{}}
 \textbf{Spurious Token} & \textbf{Prob ($P$)} & \textbf{Adv} & \textbf{Top-5 Distribution} \\
 \badtoken{3} & 0.0710\% & 1.62 & \texttt{\textvisiblespace}(99.92\%) | 3(0.07\%) |  \texttt{\textbackslash}n(0.00\%) |  \texttt{\textbackslash}(0.00\%) |  +(0.00\%) \\
\end{tabular} \\        
\midrule

% --- Case 5 ---
\rowcolor{lightpurple} 
\textbf{Case 5} \\
\textbf{Context:} Repeating these steps three more times for each roll back to $P_4$:
- If $A$ was rolled, $P_5 = (56, 368)$
- If $B$ was rolled, $P_5 = (56, 184)$
- If $C$ was rolled, $P_5 = (768, 184)$\badtoken{:}\\
\arrayrulecolor{linegray}\midrule[0.5pt]\arrayrulecolor{black}
 \begin{tabular}{@{}l l l p{8cm}@{}}
 \textbf{Spurious Token} & \textbf{Prob ($P$)} & \textbf{Adv} & \textbf{Top-5 Distribution} \\
 \badtoken{:} & 0.0035\% & 2.47 & \texttt{\textbackslash}n\texttt{\textbackslash}n(88.06\%) |  ((4.38\%) | \texttt{\textbackslash}n(4.38\%) |  \texttt{\textvisiblespace\textbackslash}n\texttt{\textbackslash}n(1.26\%) | ,(0.36\%) \\
\end{tabular} \\
\midrule

% --- Case 6 ---
\rowcolor{lightpurple} 
\textbf{Case 6} \\
\textbf{Context:} 
Hence, the maximum value of \texttt{\$\textbackslash frac\{n\_i\}\{k\}\$} for \texttt{1 \textbackslash leq i \textbackslash leq 70} is: \newline
\texttt{\textbackslash[} \newline
\texttt{\textbackslash boxed\{553\}} \newline
\texttt{\textbackslash]} \newline
\texttt{\badtoken{struck}user<|endoftext|>} \\
\arrayrulecolor{linegray}\midrule[0.5pt]\arrayrulecolor{black}
 \begin{tabular}{@{}l l l p{8cm}@{}}
 \textbf{Spurious Token} & \textbf{Prob ($P$)} & \textbf{Adv} & \textbf{Top-5 Distribution} \\
 \badtoken{struck} & 0.0000\% & 1.21 & \texttt{\textbackslash}(98.58\%) | \texttt{\textbackslash})(0.13\%) |  \texttt{\textbackslash}(0.10\%) | \$(0.08\%) | \$\$(0.07\%) \\
\end{tabular} \\        

\end{longtable}

\end{document}